\def\eqref#1{equation~\ref{#1}}
\def\1{\bm{1}}
\DeclareMathAlphabet{\mathsfit}{\encodingdefault}{\sfdefault}{m}{sl}
\SetMathAlphabet{\mathsfit}{bold}{\encodingdefault}{\sfdefault}{bx}{n}
\newcommand{\E}{\mathbb{E}}
\newcommand{\KL}{D_{\mathrm{KL}}}
\title{Noise Contrastive Alignment of Language Models with Explicit Rewards}
\author{Huayu Chen$^{1,2}$, Guande He$^{1,2}$, Lifan Yuan$^1$, Ganqu Cui$^{1}$, Hang Su$^{1,2,3}$, Jun Zhu$^{1,2}$\thanks{The corresponding author} \\
$^1$Department of Computer Science and Technology, Tsinghua University \\
$^2$Institute for AI, BNRist Center, Tsinghua-Bosch Joint ML Center, THBI Lab, Tsinghua University \\
$^3$Zhongguancun Laboratory, Beijing, China
}
\theoremstyle{plain}
\newtheorem{theorem}{Theorem}[section]
\newtheorem{proposition}[theorem]{Proposition}
\theoremstyle{definition}
\theoremstyle{remark}
\begin{document}

\maketitle
\begin{abstract}
User intentions are typically formalized as evaluation rewards to be maximized when fine-tuning language models (LMs). Existing alignment methods, such as Direct Preference Optimization (DPO), are mainly tailored for pairwise preference data where rewards are implicitly defined rather than explicitly given. In this paper, we introduce a general framework for LM alignment, leveraging Noise Contrastive Estimation (NCE) to bridge the gap in handling reward datasets explicitly annotated with scalar evaluations. Our framework comprises two parallel algorithms, NCA and InfoNCA, both enabling the direct extraction of an LM policy from reward data as well as preference data. Notably, we show that the DPO loss is a special case of our proposed InfoNCA objective under pairwise preference settings, thereby integrating and extending current alignment theories. By comparing NCA and InfoNCA, we demonstrate that the well-observed decreasing-likelihood trend of DPO/InfoNCA is caused by their focus on adjusting relative likelihood across different responses.
In contrast, NCA optimizes the absolute likelihood for each response, thereby effectively preventing the chosen likelihood from decreasing. We evaluate our methods in both reward and preference settings with Mistral-8$\times$7B and 7B models. Experiments suggest that InfoNCA/NCA surpasses various preference baselines when reward datasets are available. We also find NCA significantly outperforms DPO in complex reasoning tasks like math and coding. Code: \url{https://github.com/thu-ml/Noise-Contrastive-Alignment}.
\end{abstract}

\section{Introduction}

Aligning pretrained Language Models (LMs) with scalar rewards that reflect human intentions is crucial for enhancing their ability to follow instructions \citep{ schulman2022chatgpt, achiam2023gpt}. 
These rewards can be given either explicitly or implicitly. 
Explicit rewards can be scalar ratings of human annotators or advanced models like GPT-4, while implicit rewards are usually preference labels assigned to pairwise responses.

One effective approach for aligning LMs with preference data is Direct Preference Optimization (DPO, \citep{DPO}). 
DPO applies a reward training loss but parameterizes the reward model as the response likelihood ratio between two LMs, allowing for training reward models and extracting LM policies simultaneously. 
This approach is more streamlined and thus more favorable compared with traditional Reinforcement Learning (RL) methods \citep{ouyang2022training}, which typically require a two-stage training process: first training reward models, then extracting LM policies.

Despite its simplicity and effectiveness, DPO is only tailored for preference data ($x \rightarrow  \{y_w > y_l\}$). When multiple responses are available, directly assigning a scalar reward to each response is usually more convenient and efficient than comparing them in a pairwise manner. The resulting reward datasets ($x \rightarrow  \{y_i, r_i\}_{1:K}$), however, cannot be directly leveraged for DPO training. Previous work \citep{zephyr} usually prunes reward datasets by selecting the best response and pairing it with a random remaining one. This is suboptimal as all reward values and additional dispreferred responses are thrown away in its data-preprocessing process.

\begin{figure}
\begin{minipage}{0.48\linewidth}
\includegraphics[width=\columnwidth]{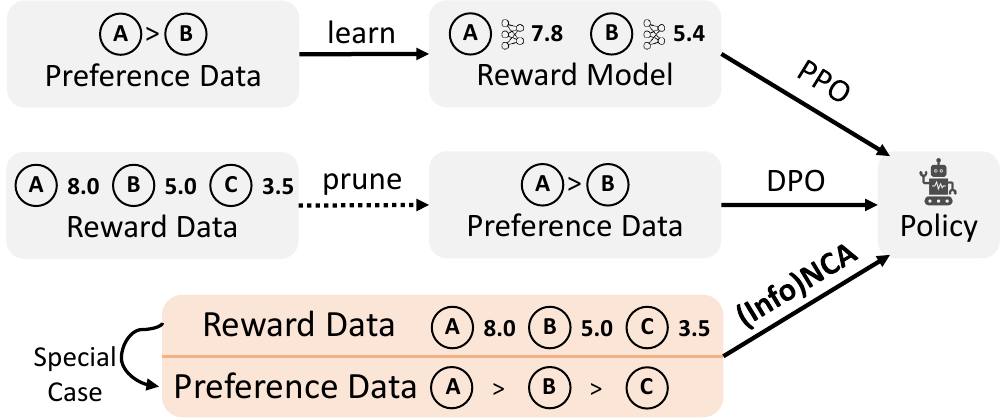}
\caption{\label{fig:motivation}InfoNCA/NCA allows direct LM optimization for both reward and preference data.}
\end{minipage}
\hspace{5mm}
\begin{minipage}{0.48\linewidth}
    \begin{minipage}{0.49\linewidth}
\includegraphics[width=\columnwidth]{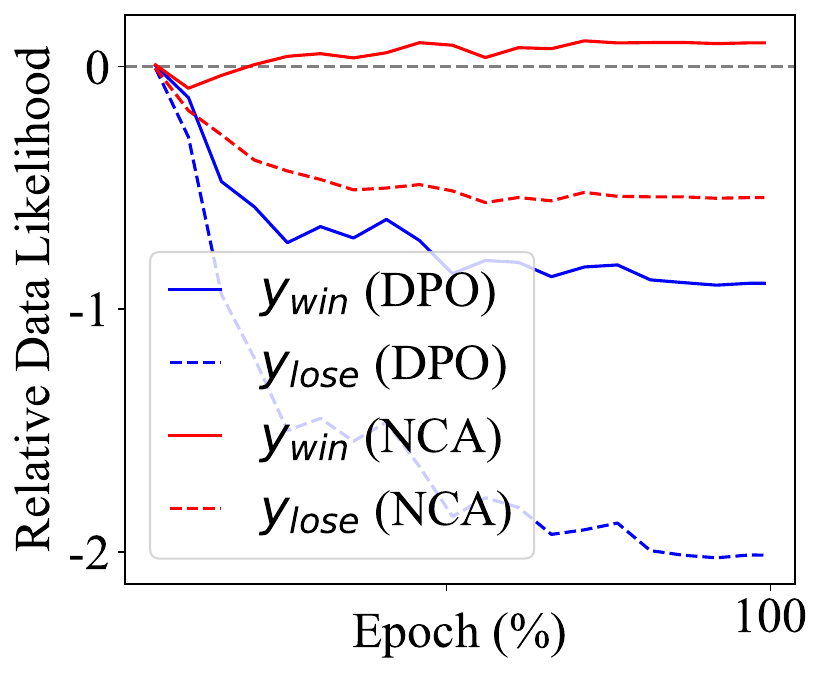}
\end{minipage}
\begin{minipage}{0.49\linewidth}
\includegraphics[width=\columnwidth]{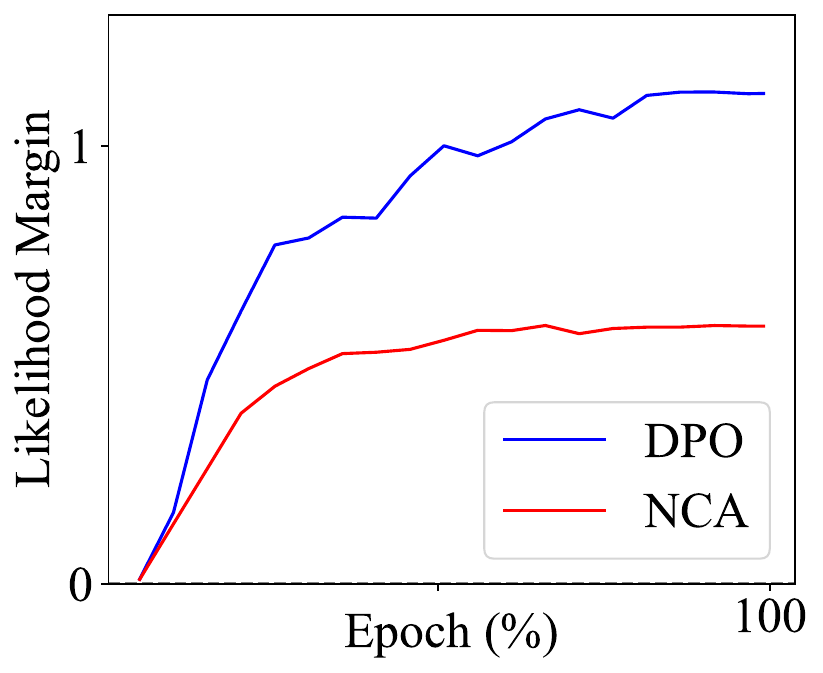}
\end{minipage}
\vspace{0.8mm}
\caption{\label{fig:trend}Pairwise NCA prevents chosen likelihood from decreasing while DPO cannot.}
\end{minipage}
    \label{fig:enter-label}
\end{figure}

To address this issue, we present \textbf{InfoNCA}, an alignment method that allows directly extracting LM policies from both reward datasets and preference datasets with arbitrary response numbers (Figure \ref{fig:motivation}). Notably, InfoNCA subsumes DPO loss as a special case under pairwise preference settings and can thus be seen as a natural extension of DPO (Sec. \ref{sec:InfoNCA_connection}). With strong theoretical guarantees, we show DPO is a binary classification loss while InfoNCA is its multi-category version (Figure~\ref{fig:classification}). However, unlike DPO which is built upon assumptions of Bradley-Terry models or Plackett-Luce models, InfoNCA is strictly derived from Information Noise Contrastive Estimation (InfoNCE, \citep{infoNCE}), an established contrastive method that is widely applied in language and visual representation learning \citep{clip}.  This closes the theoretical gap between current preference alignment methods and classic contrastive learning frameworks.

A well-observed problem with DPO is that the likelihood of the preferred response tends to \emph{decrease} throughout training \citep{pal2024smaug, DPOQ}. Similar trends are shared by InfoNCA. We find this issue arises mainly from InfoNCA/DPO's focus on adjusting the \textit{relative} likelihood across different responses per instruction. In Sec. \ref{sec:NCA}, we propose \textbf{NCA} as an alternative alignment method to InfoNCA to mitigate this problem. NCA differs from InfoNCA by only loss definition and is also suitable for both preference and reward datasets. However, NCA is built on NCE \citep{NCE}, a parallel contrastive learning algorithm to InfoNCE, which optimizes the \textit{absolute} data likelihood during training. In practice, NCA effectively prevents the chosen likelihood from decreasing (Figure \ref{fig:trend}).

We evaluate our methods on Mistral-7B and 8$\times$7B models from two dimensions. When reward datasets \citep{UltraFeedback} are available, we show that directly applying our reward-based alignment offers clear improvement compared with preference-based algorithms, achieving higher evaluation rewards in GPT-4 \citep{MTbench, AlpacaEval} evaluations. We further validate this improvement comes from InfoNCA/NCA's ability to fully leverage the additional suboptimal responses. When only preference data is given \citep{ultrainteract}, we compare pairwise NCA against the DPO loss. Our experimental results spanning various benchmarks show that NCA outperforms DPO in complex reasoning tasks such as math and coding.

Our main contributions: 1. We bridge the theoretical gap between DPO and classic contrastive learning theories. InfoNCA and NCA are uniquely suited for both reward and preference data, offering a general framework that integrates preference-based algorithms. 2. We show that suboptimal responses are also important for LM optimization. Our method outperforms various preference methods by fully exploiting data information in reward datasets. 3. NCA effectively mitigates the data likelihood decline issue of DPO and offers practical performance improvement. 

\section{Background: Direct Preference Optimization}
LM alignment is essentially a constrained policy optimization problem:
\begin{equation}
\label{Eq:rl_main}
    \max_{\pi_\theta} \mathbb{E}_{p(x)}\big[ \mathbb{E}_{\pi_\theta(y|x)} r(x, y)
     - \alpha \KL \left(\pi_\theta(\cdot |x) || \mu(\cdot |x) \right)\big],
\end{equation}
where $\mu$ represents the pretrained LM. $x$ and $y$ are respectively instructions and responses. $r$ is a reward function that reflects human intentions. $\alpha$ is some temperature coefficient. Prior work \citep{peters2007reinforcement, awr} has proved that the optimal solution for the optimization problem in Eq. \ref{Eq:rl_main} is:
\begin{equation}
\label{eq:tarining_target}
\pi^*(y|x) = \mu(y|x)\frac{e^{r(x,y)/\alpha}}{Z(x)} \propto  \mu(y|x)e^{r(x,y)/\alpha} .
\end{equation}

Direct Preference Optimization (DPO)~\citep{DPO} assumes we only have access to some pairwise preference data $x \rightarrow  \{y_w > y_l\}$ for each instruction $x$. The preference probability of human annotators is modeled by a learnable implicit reward model $r_\theta$ under Bradley-Terry theories \citep{bradley1952rank}:
\begin{equation*}
    \pi_\theta(y_w > y_l|x) :=  \sigma (r_\theta(y_w, x) - r_\theta(y_l, x)),
\end{equation*}
where $\sigma$ is the sigmoid function. To learn $r_\theta$, DPO simply adopts a binary classification loss:
\begin{equation}
\label{Eq:loss_DPO}
    \mathcal{L}_\text{DPO} =- \E_{\{x, y_w > y_l\}} \log \sigma (r_\theta(y_w, x) - r_\theta(y_l, x)). 
\end{equation}
In practice, the latent function $r_\theta$ is parameterized by the log-likelihood ratio between $\pi_\theta$ and $\mu$:
\begin{equation*}
\label{sec:parameterize}
    r_\theta(x, y):= \beta \log \frac{\pi_\theta(y|x)}{\mu(y|x)},
\end{equation*}
where $\beta$ a linear coefficient for scaling $r_\theta$. This parameterization is crucial because it ensures $\pi^\theta(y|x) \propto \mu(y|x) e^{r_
\theta(x,y) / \beta}$ constantly hold. It transforms generative policy optimization into a simple discriminative classification task: When $r_\theta=r$ and $\beta = \alpha$ are satisfied, we naturally have $\pi_\theta = \pi^*$.

\section{InfoNCA: Extending DPO from Preference to Explicit Rewards}
\label{sec:InfoNCA}
Compared with constructing preference datasets, annotating each response with scalar rewards can be more flexible and convenient. Preference methods are only suitable for pairwise data ($x \rightarrow  \{y_w > y_l\}$) and would require $C_K^2$ evaluations for comparing $K$ responses. In contrast, reward datasets ($x \rightarrow  \{y_i, r_i\}_{1:K}$) allow an arbitrary number of responses per prompt with $K$ evaluations. 

Despite its simplicity in handling preference data, DPO is not tailored for reward datasets. We introduce a new alignment method termed InfoNCA to mitigate this gap. We first strictly derive InfoNCA in Sec. \ref{sec:deriving_InfoNCA}. We show that reward alignment can be solved by constructing a classification problem to identify the optimal response from multiple candidates. We then demonstrate that InfoNCA subsumes DPO as a special case and thus is a natural extension of DPO (Sec. \ref{sec:InfoNCA_connection}).

\subsection{Reward Alignment through Multi-Class Classification}
\label{sec:deriving_InfoNCA}
In essence, DPO represents response rewards as LM likelihoods and constructs a \emph{binary} classification task for learning the reward model. Given that there are more than two ($K>2$) responses per prompt in reward datasets, we seek to construct a \emph{multi-class} classification task for learning reward models from explicit rewards instead of preference labels. We begin by formally defining this task:

Consider a batch of $K$ responses $\{y_i\}_{1:K}$ for an instruction $x$. $\{y_i\}_{1:K}$ consists of one optimal response $y_\nu$ that is sampled from $\pi^*(y|x) \propto \mu(y|x) e^{r(x,y)/\alpha}$, and $K-1$ suboptimal noises independently sampled from $\mu(y|x)$. $\nu \in 1:K$ is the random index of that optimal response. Our goal is to identify which of the $K$ candidates is $y_\nu$, given only reward labels $r(y_i)$ for each response. 
\begin{figure}[t]
    \centering
    \includegraphics[width=\textwidth]{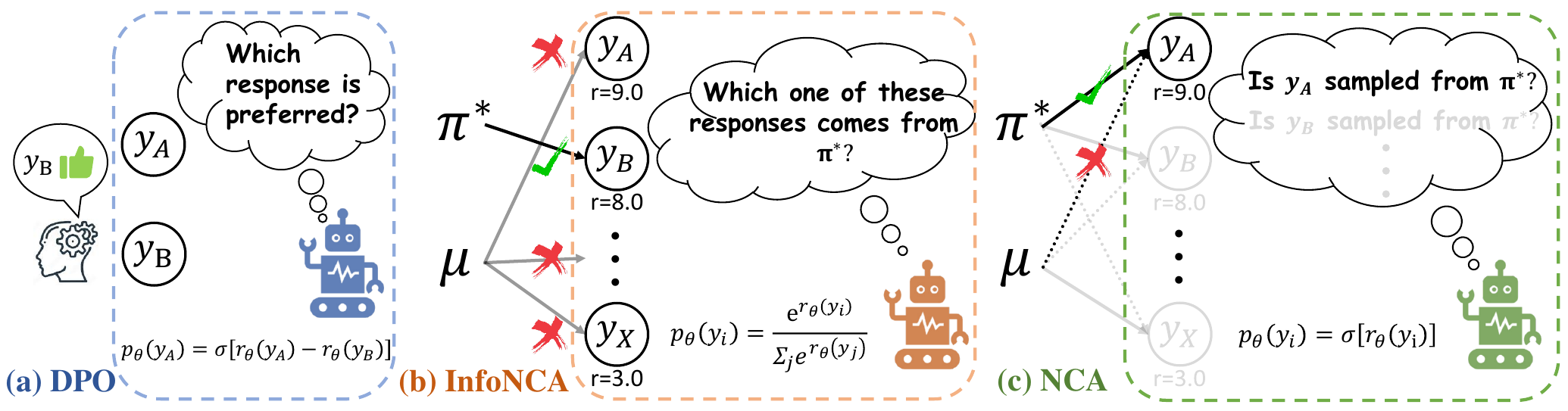}
    \caption{DPO, InfoNCA, and NCA all optimize LLM through classification tasks. \textbf{DPO} compares two responses and tells which one is preferred. \textbf{InfoNCA} compares multiple responses and identifies the one sampled from $\pi^*$ (Sec. \ref{sec:deriving_InfoNCA}). \textbf{NCA} predicts the model source of a single response (Sec. \ref{sec:NCA_derivation}).}
    \label{fig:classification}
\end{figure}

Intuitively, the response with higher rewards should have a higher probability of being the target response. This intuition can be more rigorously expressed:
\begin{proposition}[proof in Appendix \ref{sec:infonca_proofs}] \label{prop:InfoNCA}Given the above $K$ response candidates and their respective rewards, the posterior probability for the $\nu$-th response being drawn from $\pi^*$ is
\begin{equation}
    p(\nu|x, \{y_i\}_{1:K}) = \frac{e^{r(x,y_\nu)}}{\sum_{i=1}^K e^{r(x,y_i)}}.
\end{equation}
\end{proposition}
This finding is highly appealing because it shows response optimality is solely related to response rewards. This allows us to conveniently train reward models $r_\theta$ via maximum likelihood (MLE).

\begin{theorem}[InfoNCA, proof in Appendix \ref{sec:infonca_proofs}]
\label{thrm:InfoNCA} We define $\pi^*(y|x) \propto \mu(y|x) e^{r(x,y)/\alpha}$ and $\pi_\theta(y|x) \propto \mu(y|x) e^{r_\theta(x,y)}$. For any $K > 1$, $\alpha > 0$, we have:

\textbf{(a) Equivalent objective.} The MLE objective for training $r_\theta$ has an equivalent form:
\begin{align}
     \hspace{-3mm}\max_\theta \E_{p(x, \{y_i\})} \log  p_\theta(\nu|x,\{y_i\})
    \Leftrightarrow \min_\theta - \E_{p(x) \prod \mu(y_i|x)} \sum_{i=1}^K \left[\frac{e^{r(x,y_i)/\alpha}}{Z(x)} \log \frac{e^{r_\theta(x,y_i)}}{\sum_{j=1}^K e^{r_\theta(x,y_j)}}\right], \hspace{-0.2cm}
     \label{Eq:unnormalized_InfoNCA_loss}
\end{align}
where $Z(x)= \E_{\mu(y|x)} e^{r(x,y)/\alpha}$.

\textbf{(b) Optimal solution.} Assume unlimited model capacity. The optimal $r_{\theta^*}$ and $\pi_{\theta^*}$ for solving (\ref{Eq:unnormalized_InfoNCA_loss}) are
\begin{align*}
    & r_{\theta^*}(x, y) = r(x, y)/\alpha + C(x), \\
 \text{and} \quad  & \pi_{\theta^*}(y|x) = \pi^*(y|x) \propto \mu(y|x) e^{{r(x,y)/\alpha}},
\end{align*}
where $C(x)$ is an arbitrary function conditioning on $x$.
\end{theorem}

In practical implementation of Eq. \ref{Eq:unnormalized_InfoNCA_loss}, we estimate $Z(x) \approx \frac{1}{K} \sum e^{r_j/\alpha}$ and parameterize $r_\theta(x,y):=\beta \log \frac{\pi_\theta(y|x)}{\mu(y|x)}$ following DPO (Eq. \ref{sec:parameterize}). The loss function becomes

\begin{align}
    \mathcal{L}^{\text{InfoNCA}}_\theta (x, \{y_i, r_i\}_{1:K}) =  - \sum_{i=1}^K \Big[\underbrace{\frac{e^{r_i/\alpha}}{\sum_{j=1}^{K} e^{r_j/\alpha}}}_{\text{soft labels}} \log \underbrace{\frac{e^{\smash{\overbrace{\scriptstyle r_\theta(x,y_i)}^{\mathclap{\text{\scriptsize model logits}}}}}}{\sum_{j=1}^{K} e^{r_\theta(x,y_j)}}}_{\text{predicted probability}}\Big]_{r_\theta(x,y):=\beta \log \frac{\pi_\theta(y|x)}{\mu(y|x)}}. \label{Eq:InfoNCE_loss}
\end{align}
The loss function in Eq. \ref{Eq:InfoNCE_loss} is termed \textbf{InfoNCA}, where \textbf{A} stands for \textbf{A}lignment. This naming reflects its functional similarity to the Infomation Noise Contrastive Estimation (InfoNCE, \citep{infoNCE}). Both methods transform generative modeling problems into classification tasks by contrasting multiple data points.

\textbf{How does InfoNCA work?} InfoNCA loss (Eq. \ref{Eq:InfoNCE_loss}) can be seen as a $K$-category cross-entropy loss. The soft label is calculated by dataset rewards through a softmax operation. The model's predictions are represented by learned reward $r_\theta$. The loss reaches 0 when $r_{\theta^*}(x, y) = r(x, y)/\alpha + C(x)$.


\subsection{InfoNCA Subsumes DPO as A Special Case}
\label{sec:InfoNCA_connection}
Below we show that DPO is a special case of InfoNCA asymptotically. Specifically, setting response number $K=2$ and reward temperature $\alpha \to 0$, we can fully recover the DPO objective:
\begin{align*}
    \mathcal{L}^{\text{InfoNCA}}_\theta (x, \{y_i, r_i\}_{1:K}) 
    &=-\sum_{i=1}^K\big[\mathbbm{1}(r_i=r_\text{max}) \log \frac{e^{r_\theta(x,y_i)}}{\sum_{j=1}^{K} e^{r_\theta(x,y_j)}}\big] \tag{$\alpha \to 0$}\\
    &=- \log \frac{e^{r_\theta(x,y_w)}}{e^{r_\theta(x,y_w)}+e^{r_\theta(x,y_l)}} \tag{$K=2$,  suppose $r_w>r_l$}\\
    &= - \log \sigma (r_\theta(x,y_w) - r_\theta(x,y_l)) \tag{DPO loss, Eq. \ref{Eq:loss_DPO}}
\end{align*}

\textbf{Empirical effect for varying hyperparameter $K$ and $\alpha$.} As indicated by the derivation above, the root difference between preference-based and reward-based methods lies in the choices of $K$ and $\alpha$.

  $K$ affects how accurately we can estimate the partition function $Z(x)\approx \sum_{j=1}^{K} e^{r_j/\alpha}$ in Eq. \ref{Eq:unnormalized_InfoNCA_loss}. In practice, we find larger $K$ can lead to better performance (Sec. \ref{sec:rew_exp}). $\alpha$ indicates a trade-off between diversity and optimality. At $a \to 0$, the InfoNCA loss increases the likelihood only for the optimal response and decreases it for all other responses, turning the reward dataset $x \rightarrow  \{y_i, r_i\}$ into a preference dataset $x \rightarrow  \{y_w > y_l\}$. We provide ablation studies of $\alpha$ and $\beta$ in Appendix \ref{appendix:ablations}.

\begin{table}[t]
\centering
\centerline{
\resizebox{1.0\textwidth}{!}{
\begin{tabular}{c|c|c}
\toprule
Alignment Method & \textbf{InfoNCA} (Sec. \ref{sec:InfoNCA}) & \textbf{NCA} (Sec. \ref{sec:NCA})\\
\midrule
Modeling Target & \multicolumn{2}{c}{\hspace{-2.3cm}$\pi^*(y|x) \propto \mu(y|x)e^{r(x,y)/\alpha}$} \\
\addlinespace
Model Definition  & $\pi_\theta(y|x)\color{red} \propto \color{black} \mu(y|x) e^{r_\theta(x,y)}$& $\pi_\theta(y|x) \color{red}=\color{black}  \mu(y|x) e^{r_\theta(x,y)}$ \\
\addlinespace
\hline
\addlinespace
Reward Dataset & \multicolumn{2}{c}{$\hspace{-2.8cm} x \rightarrow  \{y_i, r_i\}_{1:K}$}\\
\addlinespace
Loss ($K$\textgreater{}1,$\alpha$\textgreater{}0)  & $- \sum_{i=1}^K \left[\frac{e^{r_i/\alpha}}{\sum_j e^{r_j/\alpha}} \log \frac{e^{r_\theta(x,y_i)}}{\sum_j e^{r_\theta(x,y_j)}}\right]$ & 
    $- \sum_{i=1}^K \left[\frac{e^{r_i/\alpha}}{\sum_j e^{r_j/\alpha}} \log \sigma (r_\theta(x,y_i)) + \frac{1}{K} \log \sigma (-r_\theta(x,y_i))\right]$\\
\addlinespace
\hline
\addlinespace
Preference Dataset &  \multicolumn{2}{c}{$\hspace{-2.8cm} x \rightarrow  \{y_w > y_l\}$}\\
\addlinespace
Loss ($K$=2, $\alpha$$\to$0)& $-\log\sigma(r_\theta(x, y_w)-r_\theta(x, y_l))$\color{red} (DPO) & 
$-\log\sigma(r_\theta(x, y_w))-\frac{1}{2} \sum_{y \in \{y_w, y_l\}}\log\sigma(-r_\theta(x, y))$\\
\addlinespace
\hline
\addlinespace
Loss Type & InfoNCE loss \citep{infoNCE} & NCE loss \citep{NCE} \\
\addlinespace
Optimizing Target & \emph{relative} value of log likelihood ratio & \emph{absolute} value of log likelihood ratio  \\
\addlinespace
Optimal $r_{\theta^*}(x,y)$ & $r(x, y)/\alpha + C(x)$ & $r(x, y)/\alpha - \log \E_{\mu(y|x)} e^{r(x,y)/\alpha}$ \\
\addlinespace
$r_{\theta^*}(x, y_\text{best}) \geq 0$ ? & not guaranteed & \CheckmarkBold \\
\bottomrule
\end{tabular}}}
\vspace{3mm}
\caption{Comparison of NCA and InfoNCA algorithm for aligning language models. Both reward loss and pairwise preference loss are given. We provide pseudocode in Appendix \ref{sec:Pseudocode}.}
\vspace{-5mm}
\label{tab:my_label}
\end{table}
\section{NCA: Fixing Decreased Response Likelihood Issue for InfoNCA}
\label{sec:NCA}
A well-observed issue with DPO is that the likelihood of \emph{all} responses continually \emph{decrease} throughout training \citep{pal2024smaug, DPOQ}. We find InfoNCA shares this trend due to their inherent equivalence. Decreased data likelihood is concerning because it directly contradicts the maximum likelihood objective for supervised training and may eventually harm performance \citep{ultrainteract}.

We hypothesize the main cause of this decreasing likelihood is that InfoNCA methods only adjust \emph{relative} rewards among responses, rather than optimizing their \emph{absolute} value. To address this problem, we take inspiration from NCE, another contrastive learning method parallel to InfoNCE, and propose \textbf{NCA}(lignment) (Sec. \ref{sec:NCA_derivation}). Similar to InfoNCA, NCA can also guarantee convergence to the optimal LM policy under ideal conditions (Theorem \ref{thrm:NCA}). However, it directly learns the absolute reward for each response, thereby counteracting the decreasing likelihood trend (Sec. \ref{sec:NCA_connection}).

\subsection{Reward Alignment through Absolute Reward Prediction}
\label{sec:NCA_derivation}
To avoid optimizing relative rewards across multiple responses, we construct a binary classification task that deals with a single response.

Specifically, imagine sampling a response $y$ randomly from either the optimal LM $\pi^*(y|x) = \mu(y|x) \frac{e^{r(x,y)/\alpha}}{Z(x)}$, or the pretrained LM $\mu(y|x)$. The marginal probability of $y$ is $p(y|x) := \frac{1}{2} \mu(y|x) + \frac{1}{2} \pi^*(y|x)$. Our goal is to guess its model source when given a response $y$ and its reward $r(y)$.

\begin{proposition}[proof in Appendix \ref{sec:nca_proofs}] \label{prop:NCA} Let a binary variable $\nu=1$ indicates the response $y$ is sampled from $\pi^*$. The posterior probability of the distribution source given the response $y$ satisfies:
\begin{equation}
    p(\nu = 1|x,y) = \frac{\pi^*(y|x)}{\mu(y|x) +\pi^*(y|x)} = \frac{e^{r(x,y)/\alpha}}{Z(x)+e^{r(x,y)/\alpha}}.
\end{equation}
\end{proposition}
Note that $p(\nu|x,y)$ is related to the partition function $Z(x)$. In order to represent model likelihood $p_\theta(\nu|x,y)$ by only employing $r_\theta$ similarly to Proposition \ref{prop:InfoNCA}, we have to redefine $\pi_\theta(y|x) \textcolor{red}{=} \mu(y|x) e^{r_\theta(x,y)}$ by absorbing $Z_\theta$ into $r_\theta$. Then we have
\begin{align}
    p_\theta(\nu = 1|x,y) = \frac{\pi_\theta(y|x)}{\mu(y|x) +\pi_\theta(y|x)} =\sigma (r_\theta(x,y)). \label{eq:NCA_model_posterior}
\end{align}
Similarly to Theorem \ref{thrm:InfoNCA}, we can derive a MLE-based training objective for optimizing $r_\theta$.
\begin{theorem}[NCA, proof in Appendix \ref{sec:nca_proofs}]
\label{thrm:NCA}
Let $\alpha > 0$, we have the maximum likelihood objective:

\textbf{(a) Equivalent objective.}
\begin{align}
    \hspace{-4mm}\max_\theta \E_{p(x, y)} \log p_\theta(\nu|x,y)  \Leftrightarrow  
     \min_\theta - \E_{p(x)\mu(y|x)}  \Big[
     \frac{e^{r(x,y)/\alpha}}{Z(x)}  \log \sigma (r_\theta(x,y)) + \log \sigma (-r_\theta(x,y))  \Big], \label{Eq:unnormalized_NCA_loss} 
\end{align}
where $Z(x)= \E_{\mu(y|x)} e^{r(x,y)/\alpha}$.

\textbf{(b) Optimal solution.} Assume unlimited model capacity. The optimal $r_{\theta^*}$ and $\pi_{\theta^*}$ for solving (\ref{Eq:unnormalized_NCA_loss}) are
\begin{align}
    & r_{\theta^*}(x, y) = r(x, y)/\alpha - \log \E_{\mu(y|x)} e^{r(x,y)/\alpha}, \label{eq:NCA_f_optimal}\\
 \text{and} \quad  & \pi_{\theta^*}(y|x) \propto \mu(y|x) e^{{r(x,y)/\alpha}}. \nonumber
\end{align}
\end{theorem}
For reward datasets ($x \rightarrow  \{y_i, r_i\}_{1:K}$), we estimate $Z(x)\approx \sum_{i=1}^K e^{r_i/\alpha}$ in Eq. \ref{Eq:unnormalized_NCA_loss} and construct $r_\theta$ similarly to InfoNCA:
\begin{align}
\label{eq:NCA_loss}
    \mathcal{L}^{\text{NCA}}_\theta(x, \{y_i, r_i\}_{1:K}) = 
    - \sum_{i=1}^K \left[ \smash{\underbrace{\frac{e^{r_i/\alpha}}{\sum_{j=1}^{K} e^{r_j /\alpha}}}_{ \text{softmax weight}} } \smash{\underbrace{\log \sigma (r_\theta(x,y_i))}_{ \substack{\text{optimizer } \uparrow  \\ \text{(increasing force)}}}  }
    + \frac{1}{K} \smash{\underbrace{\log \sigma (-r_\theta(x,y_i))}_{\substack{\text{regularizer } \downarrow\\ \text{(decreasing force)}}}} \right]_{r_\theta(x,y):=\beta \log \frac{\pi_\theta(y|x)}{\mu(y|x)}}
\end{align}
\textbf{How does NCA work?} 
The loss function for NCA involves two opposing forces that jointly determine the trend of increasing or decreasing $r_\theta(x,y)$. Since $\log \sigma(\cdot)$ is a monotonically increasing function, the first term in Eq. \ref{eq:NCA_loss} tends to increase $r_\theta(x,y)$ while the second term tends to decrease it. 

At the start of training, when $r_\theta = - r_\theta = 0$, the direction of the combined force for $r_\theta$ is decided by the difference in their weights, expressed as $\frac{e^{r_i/\alpha}}{\sum_{j=1}^{K} e^{r_j /\alpha}} -\frac{1}{K}$. Responses with higher rewards would, in principle, attain higher likelihood after training.

\subsection{Connection between NCA and InfoNCA/DPO}
\label{sec:NCA_connection}
Although both NCA and InfoNCA originate from solving a noise contrastive classification problem, their optimization targets are markedly different (Table \ref{tab:my_label}). 

InfoNCA and DPO both calibrate \textit{relative} values of reward models across various responses $\{y_i\}_{1:K}$ for an instruction $x$. In other words, the absolute value of $r_{\theta}(x,y)$ is not directly constrained. This can lead to some counterintuitive behaviors. For instance, the learned reward for even the highest-reward response could decrease over time without contradicting the loss definition, as long as the reward margin keeps increasing. This could lead to poor performance or training instability (Sec. \ref{sec:nca_exp}).

In contrast, NCA specifically focuses on optimizing \textit{absolute} values of the reward model. This characteristic is determined by its model definition: $\pi_\theta(y|x) \textcolor{red}{=} \mu(y|x) e^{r_\theta(x,y)}$, where $r_\theta$ has to be \emph{self-normalized}: $\E_{\mu(y|x)}e^{r_\theta(x,y)}=1$. In practice, NCA effectively prevents the likelihood of the preferred responses from decreasing. We find this is particularly helpful for math and coding tasks.

\section{Experiments}
We mainly seek to answer two questions in our experiments:
\begin{enumerate}
    \item If we have access to reward-annotated datasets with $>\!\!2$ responses per prompt, does InfoNCA or NCA offer empirical improvement compared with preference-based approaches that simply prune reward datasets into preference datasets? (Sec. \ref{sec:rew_exp})
    \item If only pairwise preference data is available, when should one choose NCA over DPO? What benefits does NCA offer? (Sec. \ref{sec:nca_exp}) Note that InfoNCA is exactly DPO in this setting. 
\end{enumerate}
\subsection{Aligning Language Models with Explicit Rewards}
\label{sec:rew_exp}
\textbf{Reward dataset and Evaluation metric.} We consider \texttt{UltraFeedback} \citep{UltraFeedback}, an instruction-following dataset annotated by GPT-4. This dataset comprises $\sim$64k instructions. Each instruction has 4 responses generated by various LMs. GPT-4 rates each response with a scalar reward on a scale of 0-10. Prior research indicates that these GPT-4 rewards closely align with human annotations \citep{MTbench}, establishing them as an efficient, cost-effective alternative to human feedback. In order to align exactly with the definition of dataset rewards, we similarly choose well-acknowledged GPT4-based benchmarks like MT-bench \citep{MTbench} and AlpacaEval \citep{AlpacaEval} for evaluation. Human preference studies are also conducted on evaluation prompts from MT-bench. The rating system is in Appendix \ref{sec:Experimental_details}.

\begin{table*}[t]
\centering
\begin{tabular}{l|lc|ccc}
\specialrule{.1em}{.05em}{.05em}
&\small\bf{Name} & \small\bf{Annotation Type} & \small\bf{MT-bench} & \small\bf{AlpacaEval} & \small\bf{Win vs. DPO} \\
\midrule
\addlinespace[2pt] 
\multirow{6}{*}{\rotatebox[origin=c]{90}{Baseline}}
&\footnotesize{Mixtral-7B-sft}& SFT Data & 6.45 & 85.20 & - \\
&\footnotesize{$\quad+$KTO \citep{kto}}& Preference & 7.12 & \underline{91.93} & - \\
&\footnotesize{$\quad+$IPO} \citep{IPO}& Preference & 7.45 & 90.62 & - \\
&\footnotesize{$\quad+$DPO (Zephyr-$\beta$)}& Preference & 7.34 & 90.60 & 50.0 \\
&\footnotesize{$\quad+$DPO$\times$3}& Preference & 7.22 & 91.60 & \underline{58.1} \\
&\footnotesize{$\quad+$DPO$\times$$C_4^2$}& Preference & 7.38 & 90.29 & 48.1 \\
\addlinespace[2pt] 
\midrule
\addlinespace[2pt] 
\multirow{2}{*}{\rotatebox[origin=c]{90}{\bf{Ours}}}
&\footnotesize{$\quad+$\bf{InfoNCA}}& Reward & \bf{7.63} & \bf{92.35} & 56.9 \\
&\footnotesize{$\quad+$\bf{NCA}}& Reward & \underline{7.52} & 90.31 & \bf{59.4} \\

\addlinespace[2pt]
\midrule
\addlinespace[2pt] 
\multirow{4}{*}{\rotatebox[origin=c]{90}{\textcolor{gray}{\bf{Reference}}}}
& \footnotesize{\textcolor{gray}{Mixtral-ORPO-$\beta$}} \hspace{-5mm} & \textcolor{gray}{Preference+SFT} & \textcolor{gray}{7.32} & \textcolor{gray}{91.41} & - \\
& \footnotesize{\textcolor{gray}{Mistral-7B-instruct}} \hspace{-5mm} & \textcolor{gray}{SFT Data} & \textcolor{gray}{6.84} & \textcolor{gray}{92.78} & - \\
& \footnotesize{\textcolor{gray}{LLaMA2-chat-70b}} & \textcolor{gray}{Reward Model} & \textcolor{gray}{6.86} & \textcolor{gray}{92.66} & - \\
&\footnotesize{\textcolor{gray}{GPT-4}} & \textcolor{gray}{Reward Model} & \textcolor{gray}{9.18} & \textcolor{gray}{93.78} & - \\
\addlinespace[2pt]
\specialrule{.1em}{.05em}{.05em}
\end{tabular}
\caption{Comparison between reward-based methods (InfoNCA, NCA) and preference-based methods (DPO, IPO, etc.) in LLM alignment. We focus on the general instruction-following abilities of each method measured by GPT-4 evaluations and human preference. The highest number in each benchmark is \textbf{bolded} and the second highest is \underline{underlined}.}
\label{tab:main_result}
\vspace{-3mm}
\end{table*}

\textbf{InfoNCA and NCA outperform preference-based methods given reward dataset.} To handle reward datasets with $K>2$ responses per instruction, one approach is to simply prune them into pairwise data and apply preference learning like DPO. For instance, Zephyr \citep{zephyr} selects the highest-reward response and a random remaining one from UltraFeedback for each instruction. This procedure discards two additional suboptimal responses in the dataset as well as their reward information. 

In Table \ref{tab:main_result}, we fine-tune a \texttt{Mistral-7B} model on UltraFeedback and compare InfoNCA/NCA against the DPO baseline. Results show that our methods outperform preference baselines. This improvement can be attributed to InfoNCA/NCA's ability to exploit all information in the reward dataset.

\begin{figure}[h]
    \centering
    \begin{minipage}{0.5\textwidth}
        \centering
        \begin{tabular}{lccc}
            \toprule
            Method & K=2 & K=3 & K=4 \\
            \midrule
            InfoNCA (MT-bench) & 73.8 & 75.9 & 76.3 \\
            InfoNCA (Alpaca) & 90.7 & 90.2 & 92.4 \\
            NCA (MT-bench) & 73.2 & 73.3 & 75.2 \\
            NCA (Alpaca) & 89.9 & 90.3 & 90.3 \\
            \textbf{Average} & \textbf{81.9} & \textbf{82.4} & \textbf{83.5} \\
            \bottomrule
        \end{tabular}
        \label{tab:comparison_k_values}
    \end{minipage}
    \begin{minipage}{0.45\textwidth}
        \centering
        \includegraphics[width=\linewidth]{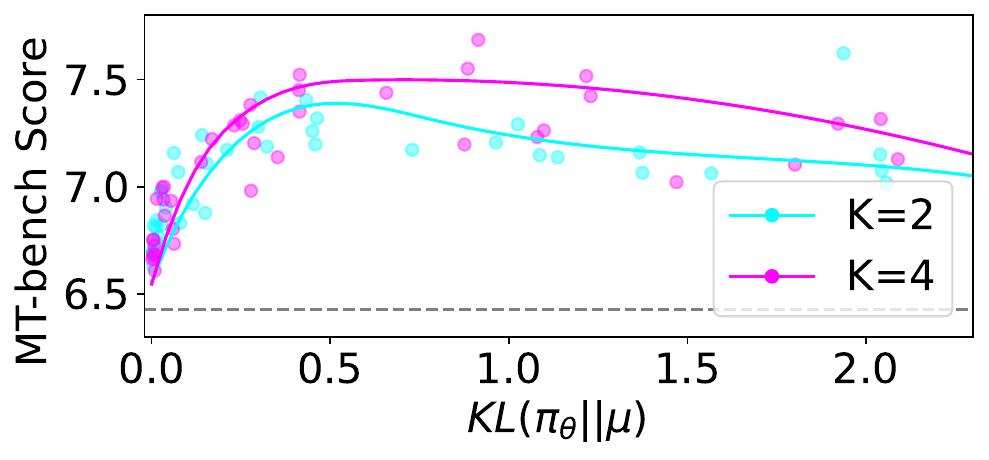}
    \end{minipage}\hfill
    \caption{\label{fig:ablateK}More suboptimal responses can also increase LLM's instruction-following ability. We fix the highest-reward 
 response in the UltraFeedback dataset and ablate the number of suboptimal responses per prompt, resulting in different contrastive response numbers $K$ during training. \textbf{Left:} Evaluation results under the same set of hyperparameters. \textbf{Right:} Performance-KL trade-off under various $\alpha$ and $\beta$. Each dot represents an independent experiment trained for 1 epoch.}
\end{figure}

\textbf{Suboptimal responses are also important.} Previous practices always ensure selecting the highest-performing response when constructing preference data. The assumption behind this strategy is that the dataset's best-performing response determines the upper limit of alignment performance. However, our experiments contradict this assumption. Results in Figure \ref{fig:ablateK} indicate that extra suboptimal responses can also be advantageous for policy training. Specifically, we observe consistent performance improvements when increasing the number of data responses from $K=2$ to $K=4$ for both InfoNCA and NCA algorithms, across various hyperparameters. 

\textbf{Combinatorial DPO are suboptimal solutions.} Regarding the performance improvement offered by more suboptimal responses, one might predict that applying the DPO to a combinatorially constructed preference dataset would yield results comparable to NCA/InfoNCA. To investigate this, we examined two variants of DPO that utilize all available responses in UltraFeedback.

\textbf{DPO$\times$3:} We pair the highest-performing response with each of the remaining three separately.

\textbf{DPO$\times C_4^2$:} We sum up all DPO loss possibilities for two out of the four responses.

Our experiments, detailed in Table \ref{tab:main_result}, reveal that naively applying combinatorial DPO loss to leverage all response information underperforms InfoNCA/NCA. The DPO$\times$3 shows some benefit, while DPO$\times C_4^2$ is harmful compared with simple data pruning. This is expected because InfoNCA and NCA possess theoretical guarantees (Theorem \ref{thrm:InfoNCA} and Theorem \ref{thrm:NCA}) that ensure convergence to the optimal LM policy whereas combinatorial preference methods do not.

\subsection{NCA vs. DPO in Aligning Language Models with Pairwise Preference}
\label{sec:nca_exp}
In previous experiments, our focus is on the reward dataset with $K>2$ responses per prompt ($x \rightarrow  \{y_i, r_i\}_{1:K}$). However, at present most alignment datasets are pairwise ($ x \rightarrow  \{y_w > y_l\}$), making it essential also to evaluate our proposed methods in pairwise preference settings.

Since InfoNCA is equivalent to DPO when only pairwise preference data is available (Sec. \ref{sec:InfoNCA_connection}), we will focus on comparing and clarifying the differences between the DPO and NCA algorithms.

\begin{table*}[t]
\centering
\centerline{
\resizebox{1.0\textwidth}{!}{
\begin{tabular}{l|c|cc|ccccc|c}
\specialrule{.1em}{.05em}{.05em}
\multicolumn{1}{l|}{\multirow{2}{*}{Model}}& \multicolumn{1}{c|}{Reasoning} & \multicolumn{2}{c|}{Coding} & \multicolumn{5}{c|}{Math} & \multirow{2}{*}{Avg.} \\
 & BBH (CoT)& LeetCode & HumanEval &   GSMPLUS & MATH & TheoremQA & SVAMP & ASDiv \\
\midrule
\addlinespace[2pt]
\footnotesize{Mixtral-7B-SFT} & 60.9 & 3.3 & 28.1 & 28.5 & 5.8 & 7.0 & 26.9 & 35.8 & 24.5 \\
\footnotesize{$\quad+$ DPO} & \bf{61.7} & \textcolor{white}{=} 2.2 $\downarrow$ & \bf{31.7} & \textcolor{white}{=} 12.1 $\downarrow$ & 6.4 & 9.8 & 34.1 & 46.1 & 25.5 \\
\footnotesize{$\quad+$ \bf{NCA}} & \textcolor{white}{=} 60.8 $\downarrow$ & \bf{3.3} & \textcolor{white}{=} 26.8 $\downarrow$ & \bf{32.3} & \bf{11.7} & \bf{11.0} & \bf{65.3} & \bf{74.3} & \bf{35.7} \\
\addlinespace[2pt]
\midrule
\footnotesize{Mixtral-8$\times$7B-SFT} & \bf{75.6} & 16.7 & 61.0 & 57.6 & 40.1 & 25.9 & 85.9 & 87.5 & 56.3 \\
\footnotesize{$\quad+$ DPO} & \textcolor{white}{=} 74.9 $\downarrow$ & 17.2 & \textcolor{white}{=} 47.6 $\downarrow$ & \textcolor{white}{=} 55.8 $\downarrow$ & \textcolor{white}{=} 35.3 $\downarrow$ & \bf{26.9} & \textcolor{white}{=} 67.3 $\downarrow$ & \textcolor{white}{=} 75.7 $\downarrow$ & \textcolor{white}{=}50.1$\downarrow$ \\
\footnotesize{$\quad+$ \bf{NCA}} & \bf{75.6} & \bf{21.1} & \bf{62.8} & \bf{61.5} & \bf{41.6} & \bf{26.9} & \bf{86.8} & \bf{86.9} & \bf{57.9} \\
\addlinespace[2pt]
\specialrule{.1em}{.05em}{.05em}
\end{tabular}
}
}
\caption{Alignment results for UltraInteract. We mark numbers that have decreased ($\downarrow$) after training.}
\vspace{-3mm}
\label{tab:main_result2}
\end{table*}

\textbf{Preferecne dataset and evaluation metrics.}
We consider fine-tuning Mistral-7B and Mistral-8$\times$7B models on UltraInteract \citep{ultrainteract}, a pairwise alignment dataset specifically designed for complex reasoning tasks. Before alignment, we perform SFT on UltraInteract's preferred responses for the 8$\times$7B model and use the existing Mistral-SFT model in Sec. \ref{sec:rew_exp}. We evaluate the model's performance in various challenging tasks. This includes BBH-Hard \citep{suzgun2022challenging} for CoT reasoning, HumanEval \citep{chen2021evaluating} and LeetCode \citep{deepseek-coder} for coding, GSM-Plus \citep{Li2024GSMPlusAC}, MATH, TheoremQA \citep{Chen2023TheoremQAAT}, SVAMP \citep{Patel2021SVAMP}, and ASDiv \citep{Miao2020ASDiv} for math.

\textbf{DPO may hurt reasoning performance while NCA helps.}
Results are presented in Table \ref{tab:main_result2}. Overall, NCA consistently outperforms DPO in various benchmarks. Notably, we observe DPO hurts the overall performance in most reasoning tasks regarding the Mixtral-8$\times$7B-SFT model. This indicates that DPO might not be suitable for improving reasoning abilities, which echoes findings in concurrent work \citep{ultrainteract}. In contrast, NCA shows clear improvement on both the 7B and 8$\times$7B models.

\begin{figure}[htbp]
\vspace{-2mm}
    \centering
    \begin{minipage}{0.24\linewidth}
    \includegraphics[width=\columnwidth]{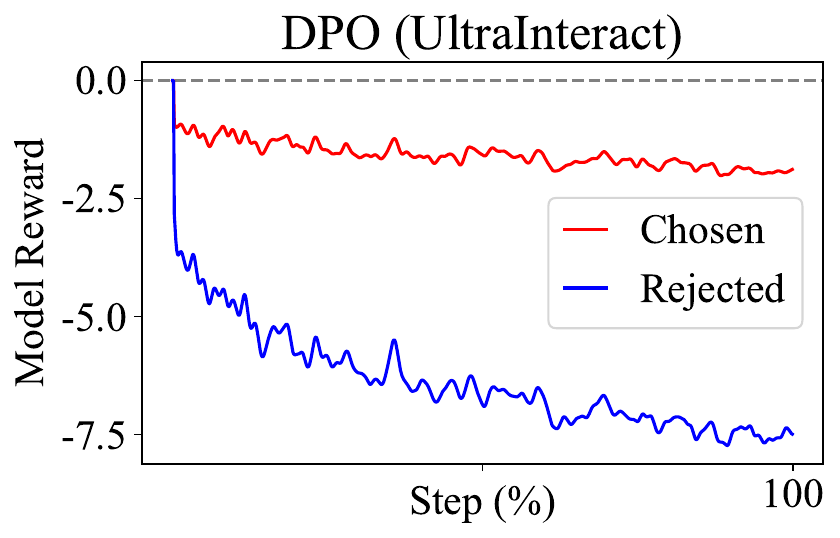}
    \centering
    \end{minipage}
    \begin{minipage}{0.24\linewidth}
    \includegraphics[width=\columnwidth]{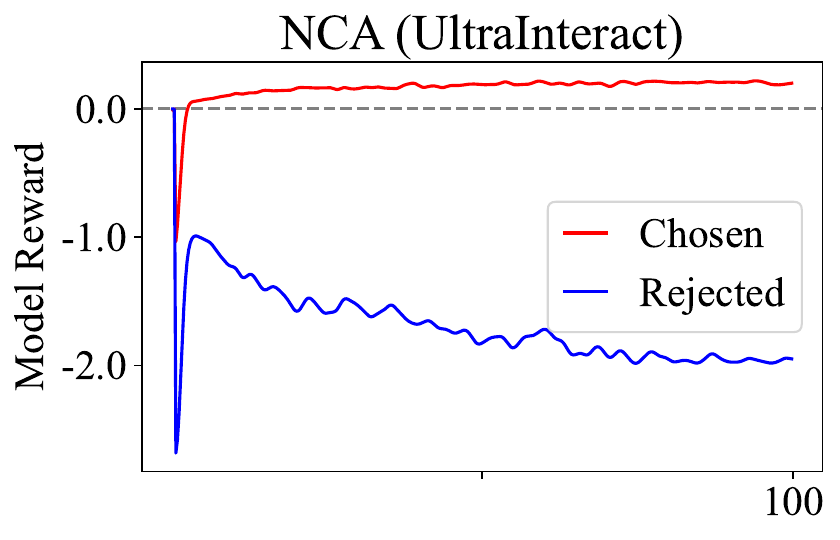}
    \centering
    \end{minipage}
    \begin{minipage}{0.24\linewidth}
    \includegraphics[width=\columnwidth]{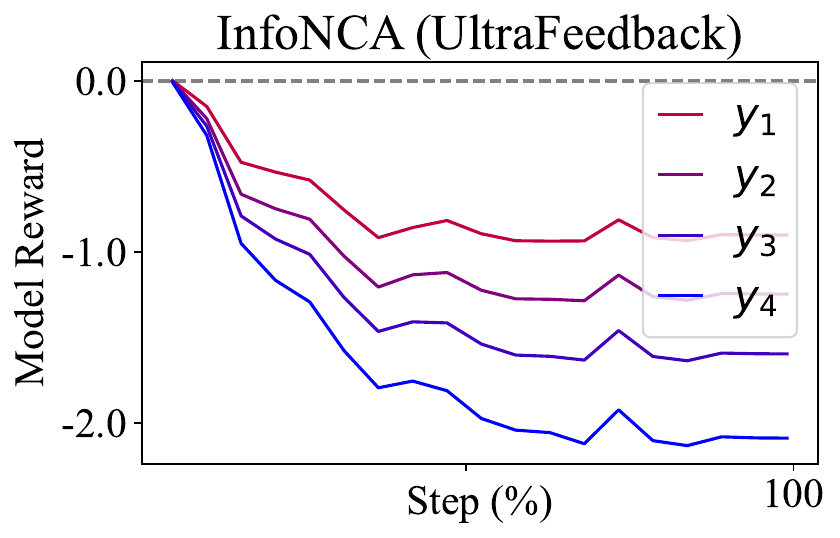}
    \centering
    \end{minipage}
    \begin{minipage}{0.24\linewidth}
    \includegraphics[width=\columnwidth]{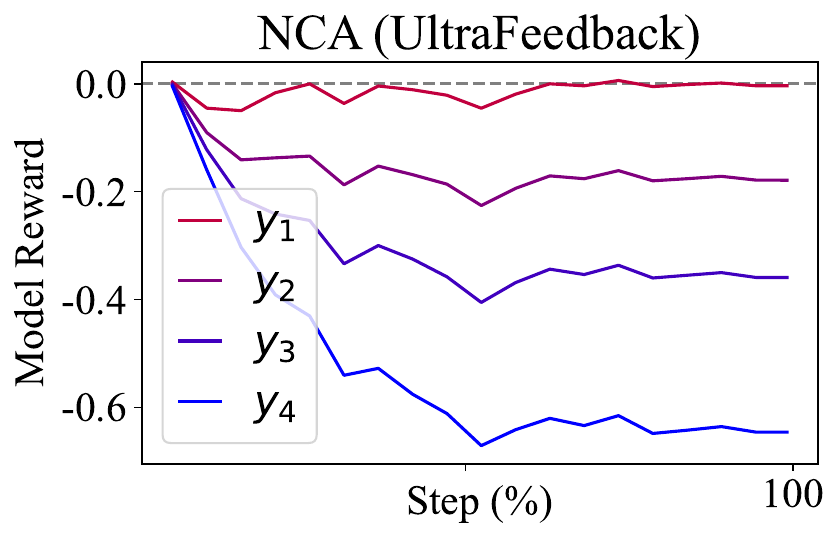}
    \centering
    \end{minipage}
    \caption{\label{fig:figure2} Comparision of data likelihood between InfoNCA/DPO and NCA.}  
    \vspace{-2mm}
\end{figure}

\textbf{NCA prevents the chosen-likelihood from decreasing.} What distinct optimization characteristics could cause performance differences between pairwise NCA and DPO? To understand this, we empirically inspect how the data likelihood changes during training. As shown in Figure \ref{fig:figure2}. The likelihood of preferred responses interestingly decreases after DPO training and increases for NCA training. This pattern is consistent across both preference and reward learning. The decreasing chosen-likelihood trend is concerning because it directly contradicts the maximum-likelihood objective used during the SFT stage. This drawback is exacerbated in reasoning tasks, where the preferred response is often the ground truth answer. Consequently, we hypothesize that NCA's superior performance in reasoning tasks is due to its ability to avoid decreasing chosen likelihood.

Since DPO is essentially a specialization of InfoNCA, their contrasting likelihood trends can be explained theoretically. As we have elaborated in Sec. \ref{sec:NCA_connection}, NCA adjusts the absolute likelihood of data, while DPO/InfoNCA only considers relative likelihood across different responses. Thus, a declining chosen likelihood directly contradicts NCA's training objective but not DPO's.

\textbf{Empirical takeaway: When to choose NCA over DPO?} DPO and pairwise NCA have similar theoretical guarantees. Their different performance in alignment tasks is largely empirical, depending on the specific characteristics of datasets and the nature of tasks. Our observations show that NCA is more suitable for reasoning tasks such as math and coding (Table \ref{tab:main_result2}), where high-quality responses are sparse, and adhering closely to the preferred responses in the dataset is critical. 
DPO may be more suitable for general instruction-following tasks like summarization/role-playing (Table \ref{tab:main_result}), where datasets only reflect human relative preference but do not contain "golden" answers. In essence, NCA benefits from better dataset regularization, while DPO relies more on LLMs' generalization abilities.

\begin{figure}
\centering
\includegraphics[width=0.48\textwidth]{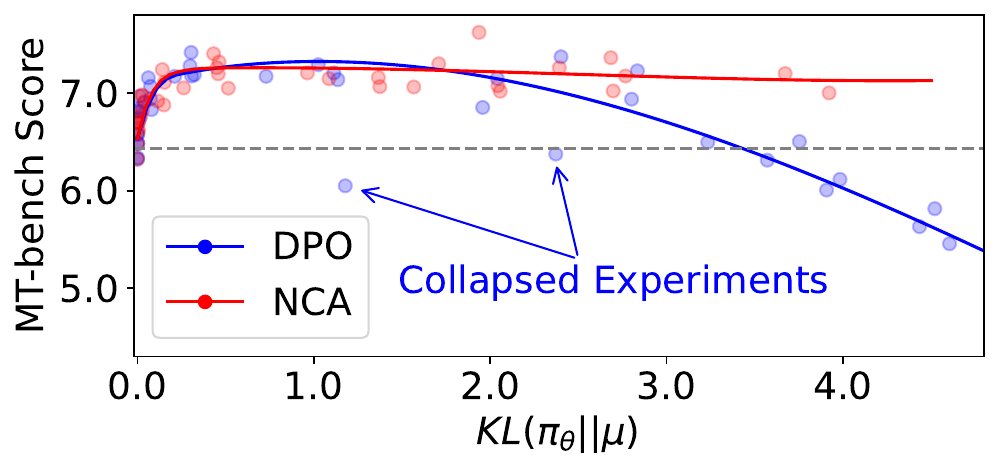}
\includegraphics[width=0.48\textwidth]{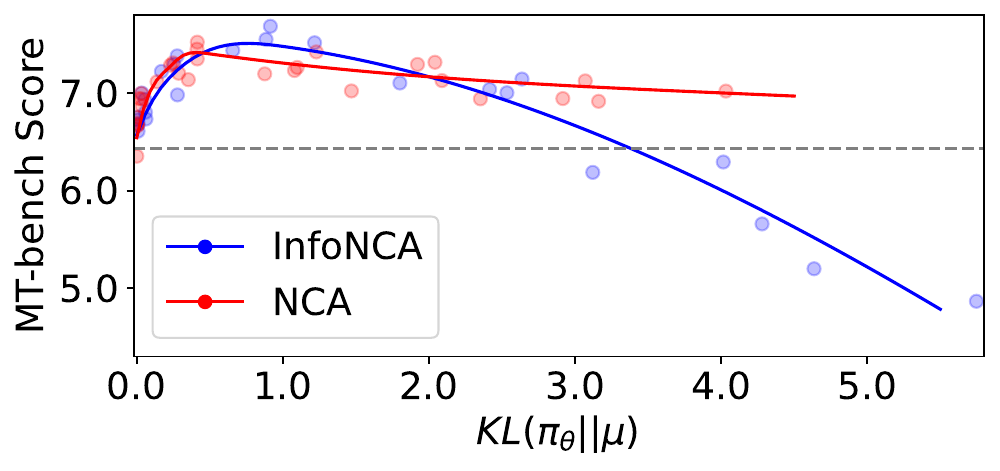}
\vspace{-0.25cm}
\caption{\label{fig:info_KL_tradeoff} NCA is more robust to hyperparameter changes and KL derivations. \textbf{Left:} Ablation results of $\alpha$ and $\beta$ for UltraFeedback-binarized. \textbf{Right:} Results for UltraFeedback-reward ($K=4$).}
\vspace{-0.4cm}
\end{figure}
We also observe that NCA has a greater tolerance for divergence from the initial SFT policy and is more robust to hyperparameter changes. As evidenced in Figure \ref{fig:info_KL_tradeoff}, we conduct a grid search on $\alpha$ and $\beta$. DPO can drastically fail to improve alignment performance if the learned policy strays too far from the SFT policy, and may randomly collapse under certain hyperparameters. In contrast, the NCA method does not exhibit similar issues. These observations suggest that NCA may be a worthwhile alternative if DPO training is unstable.


\section{Related Work}
\textbf{Language model alignment.} Current approaches cater to either explicit reward data or preference data, often lacking the versatility to address both concurrently. Reinforcement Learning \citep{schulman2017proximal} is inherently suitable for explicit reward scenarios. However, its on-policy nature necessitates learning a reward model from data first, leading to an indirect two-stage optimization process \citep{preferecerl, ouyang2022training, shen2024improving}. Recent developments in preference-based alignment techniques \citep{DPO, IPO, kto, wang2023beyond, ORPO, guo2024controllable} have streamlined this process. They enable direct alignment of LMs through a singular loss, but this comes at the expense of being confined to pairwise preference data. Other alignment approaches \citep{rrhf, song2023preference, slic, song2024preference} are also not tailored for aligning with reward datasets. Recent work \citep{cai2023ulma} attempts to extend DPO's parameterization technique to explicit reward contexts. However, it only considers binary rewards. In comparison, our methods can handle both continuous rewards and preference data.

\textbf{Noise contrastive estimation.} NCE \citep{NCE} and its variant, InfoNCE \citep{infoNCE}, are established optimization methods for training unnormalized generative models \citep{NCE_NLP_review}. 
NCE primarily leverages a binary classification loss and can be applied in self-supervised representation learning. Examples are Word2Vec \citep{word2vec}, MoCo \citep{moco}, and SimCLR \citep{simclr}. InfoNCE is related to maximizing mutual information between two distributions through a multiclass cross-entropy loss. It has successful applications in representation learning, such as CLIP \citep{clip}. It is also widely used in language modeling \citep{infoxlm}, diffusion modeling \citep{qgpo}, and reinforcement learning \citep{CURL}.

\section{Conclusion}
In this work, we formally consider the language model alignment problem in the context of explicit reward settings. By adeptly harnessing the NCE and InfoNCE theories, we introduce two practical algorithms: NCA and InfoNCA. Our proposed methods are uniquely suited for both reward data and preference data, including DPO as a special case. Our experiments show that reward-based alignment methods outperform preference baseline by fully leveraging suboptimal responses in reward datasets. In preference settings, pairwise NCA outperforms DPO in complex reasoning tasks by effectively preventing data likelihood from decreasing.

\begin{ack}
We especially thank Cheng Lu, who greatly inspires us in linking the NCA algorithm with NCE theories. We also thank Jiuhai Chen, and Tianlin Liu for their suggestions on Zephyr result reproduction. We thank Bingrui Li, and Weiyu Huang for their help with the experimental setup. We thank Github user Wing Lian for integrating the NCA algorithm into the trl library. We thank Haosheng Zou for providing feedback on our method.

This work was supported by NSFC Projects (Nos. 62350080, 92370124, 92248303, 62276149, 
62061136001, 62076147), BNRist (BNR2022RC01006), Tsinghua Institute for Guo Qiang, and the
High Performance Computing Center, Tsinghua University. J. Zhu was also supported by the XPlorer
Prize.

\end{ack}

\clearpage
\bibliographystyle{plain}
\bibliography{neurips_2024}
\newpage
\appendix

\section{Proof of Theorems}
\subsection{InfoNCA Objective}
\label{sec:infonca_proofs}
Recall that our optimal language policy is
\begin{equation*}
    \pi^*(y|x) =   \mu(y|x) \frac{e^{r(x,y)/\alpha}}{Z(x)}.  \tag{Eq. \ref{eq:tarining_target}}
\end{equation*}
Consider a batch of $K$ responses $\{y_i\}_{1:K}$ for an instruction $x$. $\{y_i\}_{1:K}$ consists of one optimal response $y_\nu$ that is sampled from $\pi^*(y|x) \propto \mu(y|x) e^{r(x,y)/\alpha}$, and $K-1$ suboptimal noises independently sampled from $\mu(y|x)$. $\nu \in 1:K$ is the random index of that optimal response.
The the joint probability for $\{y_i\}_{1:K}$ is
\begin{equation*}
    p^{\text{joint}}(\{y_i\}_{1:K}|x, \nu) = \pi^*(y_\nu|x) \prod_{i \neq \nu} \mu(y_i|x) = \frac{\pi^*(y_\nu|x)}{\mu(y_\nu|x)} \prod_{i = 1}^K \mu(y_i|x).
\end{equation*}
Given that the prior satisfies $p(\nu = 1) = p(\nu = 2) = ... = p(\nu = K) = \frac{1}{K}$, the data posterior is
\begin{equation*}
    p^{\text{joint}}(\nu|x, \{y_i\}_{1:K})
    =  \frac{\pi^*(y_\nu|x) / \mu(y_\nu|x)  }{ \sum_{j=1}^{K} \pi^*(y_j|x)/ \mu(y_j|x)}.
\end{equation*}
\begin{align*}
    p(\mathcal{O}= y_i| \{y\}_{1:K}) &=   \frac{p(\{y\}_{1:K}| \mathcal{O}= y_i) p(\mathcal{O}= y_i)}{ \sum_{j=1}^{K} p(\{y\}_{1:K}| \mathcal{O}= y_j) p(\mathcal{O}= y_j)}\\
    &=  \frac{\pi^*(y_i|x) / \mu(y_i|x)  }{ \sum_{j=1}^{K} \pi^*(y_j|x)/ \mu(y_j|x)}\\
     &=  \frac{e^{r(y_i)/\alpha} }{ \sum_{j=1}^{K}e^{r(y_j)/\alpha}}
\end{align*}

Define model policy as 
\begin{equation*}
    \pi_\theta(y|x):=\mu(y|x) \frac{e^{r_\theta(x,y)}}{Z_\theta(x)}. 
\end{equation*}
The model posterior probability satisfies
\begin{equation*}
    p^{\text{joint}}_\theta(\nu|x, \{y_i\}_{1:K}) = \frac{e^{r_\theta(x,y_\nu)}}{\sum_{i=1}^K e^{r_\theta(x,y_i)}}. 
\end{equation*}

\begin{theorem}[InfoNCA Objective]
For any $K > 1$, $\alpha > 0$, we have the following results.

\textbf{(a) Equivalent objective.}
\begin{align*}
    &\min_\theta \E_{p^{\text{joint}}(x, \{y_i\})} \KL[p^{\text{joint}}(\nu|x,\{y_i\}) || p^{\text{joint}}_\theta(\nu|x,\{y_i\})] \\ \Longleftrightarrow  
     &\min_\theta - \E_{p(x) \prod \mu(y_i|x)} \sum_{i=1}^K \frac{e^{r(x,y_i)/\alpha}}{Z(x)} \log \frac{e^{r_\theta(x,y_i)}}{\sum_{j=1}^K e^{r_\theta(x,y_j)}}, \tag{Eq. \ref{Eq:unnormalized_InfoNCA_loss}}
\end{align*}
where $Z(x)= \E_{\mu(y|x)} e^{r(x,y)/\alpha}$.

\textbf{(b) Optimal solution.} Assume unlimited model capacity and data samples. The optimal $r_{\theta^*}$ and $\pi_{\theta^*}$ for solving Eq. \ref{Eq:unnormalized_InfoNCA_loss} are
\begin{align*}
    &r_{\theta^*}(x, y) = r(x, y)/\alpha + C(x), \\
 \text{and} \quad  & \pi_{\theta^*}(x,y) \propto \mu(y|x) e^{{r(x,y)/\alpha}},
\end{align*}
where $C(x)$ is an arbitrary function conditioning on $x$.
\end{theorem}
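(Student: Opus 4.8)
\textbf{Proof proposal for Theorem \ref{thrm:InfoNCA}.}

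The plan is to prove parts (a) and (b) separately, treating (a) as an algebraic simplification of the KL objective and (b) as a pointwise optimization of the resulting cross-entropy loss. For part (a), I would start from the definition of the KL-divergence objective in \eqref{infonca-problem} and expand it as $\E_{p^{\text{joint}}(x,\{y_i\})}\sum_\nu p^{\text{joint}}(\nu|x,\{y_i\}) \log \frac{p^{\text{joint}}(\nu|x,\{y_i\})}{p^{\text{joint}}_\theta(\nu|x,\{y_i\})}$. The numerator term $p^{\text{joint}}(\nu|x,\{y_i\})\log p^{\text{joint}}(\nu|x,\{y_i\})$ does not depend on $\theta$, so it can be dropped, leaving the cross-entropy term $-\E\sum_\nu p^{\text{joint}}(\nu|x,\{y_i\})\log p^{\text{joint}}_\theta(\nu|x,\{y_i\})$. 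The key bookkeeping step is to rewrite the outer expectation: the sampling distribution $p^{\text{joint}}(x,\{y_i\})$ is a mixture over the latent index $\nu$, but by the symmetry of the construction (the uniform prior on $\nu$ and the exchangeability of the $K$ slots) this expectation can be converted into an expectation over $p(x)\prod_i \mu(y_i|x)$ reweighted by the likelihood ratio $\pi^*(y_i|x)/\mu(y_i|x) = e^{r(x,y_i)/\alpha}/Z(x)$. Substituting the explicit model posterior from \eqref{eq:InfoNCA_model_posterior} then yields exactly the right-hand side of \eqref{Eq:unnormalized_InfoNCA_loss}.

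For part (b), I would fix an instruction $x$ and view the inner objective as a function of the vector of logits $\{f_\theta(x,y_i)\}_{i=1}^K$. Under unlimited capacity these logits are free parameters, so I can minimize pointwise. The inner loss is a cross-entropy between the fixed soft-label distribution $q_i := e^{r(x,y_i)/\alpha}/\sum_j e^{r(x,y_j)/\alpha}$ and the softmax distribution $p_i := e^{f_\theta(x,y_i)}/\sum_j e^{f_\theta(x,y_j)}$. The standard fact is that cross-entropy $-\sum_i q_i \log p_i$ over the simplex is minimized exactly when $p_i = q_i$ for all $i$; this follows from Gibbs' inequality (nonnegativity of $\KL(q\|p)$ with equality iff $q=p$). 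Setting $p_i = q_i$ forces $f_\theta(x,y_i) = r(x,y_i)/\alpha + C(x)$ for some additive constant $C(x)$, since the softmax is invariant to a constant shift of all logits and this shift is the only residual degree of freedom. Feeding this back into the model definition \eqref{Eq:InfoNCA_definition} gives $\pi_{\theta^*}(y|x) \propto \mu(y|x)e^{r(x,y)/\alpha}$, with the constant $C(x)$ absorbed into the normalizer $Z_\theta(x)$, which establishes the claimed optimal policy.

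The main obstacle I anticipate is the measure-change step in part (a): carefully justifying the rewriting of $\E_{p^{\text{joint}}(x,\{y_i\},\nu)}$ as a reweighted expectation over the product noise distribution $p(x)\prod_i\mu(y_i|x)$. One must track that the positive sample in slot $\nu$ is drawn from $\pi^*$ while the others are drawn from $\mu$, then use the uniform prior on $\nu$ together with the symmetry of the cross-entropy summand to collapse the sum over $\nu$ into a single reweighted sum over slots $i$. This is where the factor $e^{r(x,y_i)/\alpha}/Z(x)$ enters the soft labels, and it is the only step that requires genuine care rather than routine manipulation; the rest is symbolic simplification and an appeal to Gibbs' inequality.
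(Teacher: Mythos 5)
Your proposal is correct and takes essentially the same route as the paper's proof: for (a), expand the KL, drop the $\theta$-independent entropy term, re-factorize the joint by Bayes' rule, and importance-sample onto $p(x)\prod_i \mu(y_i|x)$ using $\pi^*(y|x)/\mu(y|x) = e^{r(x,y)/\alpha}/Z(x)$; for (b), conclude by matching the model posterior to the data posterior. Your only presentational difference is applying Gibbs' inequality pointwise to the cross-entropy form of the equivalent objective rather than setting the original KL to zero as the paper does, but these are the same argument, since your soft-label distribution $q_i$ is exactly the data posterior $p^{\text{joint}}(\nu = i \,|\, x, \{y_j\})$.
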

\begin{proof}

\textbf{(a) Equivalent objective.}
\begin{align*}
    &\min_\theta \E_{p^{\text{joint}}(x, \{y_i\})}\KL[p^{\text{joint}}(\nu|x,\{y_i\}) || p^{\text{joint}}_\theta(\nu|x,\{y_i\})]\\
    \Leftrightarrow &\min_\theta \E_{p^{\text{joint}}(x, \{y_i\})} \E_{p^{\text{joint}}(\nu|x,\{y_i\})} \log \frac{p^{\text{joint}}(\nu|x,\{y_i\})}{p^{\text{joint}}_\theta(\nu|x,\{y_i\})} \\
    \Leftrightarrow &\min_\theta - \E_{p^{\text{joint}}(x, \{y_i\})} \E_{p^{\text{joint}}(\nu|x,\{y_i\})} \log p^{\text{joint}}_\theta(\nu|x,\{y_i\}) \\
    \Leftrightarrow &\min_\theta - \E_{p(x) p(\nu) p^{\text{joint}}(\{y_i\}|x,\nu)} \log p^{\text{joint}}_\theta(\nu|x,\{y_i\}) \tag{Bayes' rule}\\
    \Leftrightarrow &\min_\theta - \E_{p(x) p(\nu)  \prod_{i = 1}^K \mu(y_i|x)} \frac{\pi^*(y_\nu|x)}{\mu(y_\nu|x)} \log p^{\text{joint}}_\theta(\nu|x,\{y_i\}) \tag{importance sampling}\\
    \Leftrightarrow &\min_\theta - \E_{p(x) \prod_{i = 1}^K \mu(y_i|x)} \left[ \E_{p(\nu)} \frac{\pi^*(y_\nu|x)}{\mu(y_\nu|x)} \log p^{\text{joint}}_\theta(\nu|x,\{y_i\}) \right]\\
    \Leftrightarrow &\min_\theta - \E_{p(x) \prod_{i = 1}^K \mu(y_i|x)} \left[ \frac{1}{K} \sum_{\nu=1}^K \frac{\pi^*(y_\nu|x)}{\mu(y_\nu|x)} \log p^{\text{joint}}_\theta(\nu|x,\{y_i\}) \right]\\
    \Leftrightarrow &\min_\theta - \E_{p(x) \prod_{i = 1}^K \mu(y_i|x)} \left[ \sum_{\nu=1}^K \frac{e^{r(x,y_\nu)/\alpha}}{Z(x)} \log p^{\text{joint}}_\theta(\nu|x,\{y_i\}) \right] \tag{based on Eq. \ref{eq:tarining_target}}\\
    \Leftrightarrow &\min_\theta - \E_{p(x) \prod_{i = 1}^K \mu(y_i|x)} \left[ \sum_{i=1}^K \frac{e^{r(x,y_i)/\alpha}}{Z(x)} \log \frac{e^{r_\theta(x,y_i)}}{\sum_{j=1}^K e^{r_\theta(x,y_j)}} \right]\tag{change sum index}\\
\end{align*}
\textbf{(b) Optimal solution.}

Given conclusions from (a). With unlimited model capacity, $p^{\text{joint}}_\theta(\nu|x,\{y_i\})$ could represent any discrete distribution, such that we can arrive at the global optimal point given infinite training data.
\begin{align*}
    &\E_{p^{\text{joint}}(x, \{y_i\})} \KL[p^{\text{joint}}(\nu|x,\{y_i\}) || p^{\text{joint}}_{\theta^*}(\nu|x,\{y_i\})] = 0\\
    \Longrightarrow \quad &p^{\text{joint}}(\nu|x,\{y_i\}) = p^{\text{joint}}_{\theta^*}(\nu|x,\{y_i\})\quad \forall x, \nu, \{y_i\}_{1:K}\\
    \Longrightarrow \quad &\frac{\pi^*(y_\nu|x) / \mu(y_\nu|x)  }{ \sum_{i=1}^{K} \pi^*(y_i|x)/ \mu(y_i|x)} = \frac{e^{r_{\theta^*}(x,y_\nu)}}{\sum_{i=1}^K e^{r_{\theta^*}(x,y_i)}} \quad \forall x, \nu, \{y_i\}_{1:K}\\
    \Longrightarrow \quad &r_{\theta^*}(x, y) = r(x, y)/\alpha + C(x) \quad \forall x, y, C\\
    \Longrightarrow \quad &\pi_{\theta^*}(x,y) \propto \mu(y|x) e^{{r(x,y)/\alpha}} \quad \forall x, y
\end{align*}
\end{proof}

\subsection{NCA Objective}
\label{sec:nca_proofs}
Recall the optimal language policy is
\begin{equation*}
    \pi^*(y|x) =   \mu(y|x) \frac{e^{r(x,y)/\alpha}}{Z(x)}.  \tag{Eq. \ref{eq:tarining_target}}
\end{equation*}
Consider a response $y$ randomly sampled from either the optimal LM $\pi^*(y|x) = \mu(y|x) \frac{e^{r(x,y)/\alpha}}{Z(x)}$, or the pretrained LM $\mu(y|x)$. Let a binary variable $\nu=1$ indicates the response $y$ is sampled from $\pi^*$.

Then the marginal distribution of $y$ is
\begin{equation*}
    p^{\text{joint}}(y|x) := p(\nu = 0) \mu(y|x) + p(\nu = 1) \pi^*(y|x).
\end{equation*}
Given the prior $p(\nu = 0) = p(\nu = 1) = \frac{1}{2}$, using Bayes' Rule, the data posterior satisfies
\begin{equation*}
    p^{\text{joint}}(\nu = 0|x,y) = \frac{\mu(y|x)}{\mu(y|x) +\pi^*(y|x)}.
\end{equation*}
\begin{equation*}
    p^{\text{joint}}(\nu = 1|x,y) = \frac{\pi^*(y|x)}{\mu(y|x) +\pi^*(y|x)}.
\end{equation*} 
Define model policy as $\pi_\theta(y|x):=\mu(y|x) e^{r_\theta(x,y)}$. The model posterior probability satisfies
\begin{equation*}
    p^{\text{joint}}_\theta(\nu = 1|x,y) =\sigma (r_\theta(x,y)). 
\end{equation*}
\begin{equation*}
    p^{\text{joint}}_\theta(\nu = 0|x,y) =1-\sigma (r_\theta(x,y)) = \sigma (-r_\theta(x,y)). \tag{Eq. \ref{eq:NCA_model_posterior}}
\end{equation*}

\begin{theorem}[NCA Objective]
For any $\alpha > 0$, we have the following results.

\textbf{(a) Equivalent objective.}
\begin{align*}
    &\min_\theta \E_{p^{\text{joint}}(x, y)} \KL[p^{\text{joint}}(\nu|x,y) || p^{\text{joint}}_\theta(\nu|x,y)] \\ \Longleftrightarrow  
     &\min_\theta - \E_{p(x)  \mu(y|x)} \frac{e^{r(x,y)/\alpha}}{Z(x)}  \log \sigma (r_\theta(x,y)) + \log \sigma (-r_\theta(x,y)), \tag{Eq. \ref{Eq:unnormalized_NCA_loss}}
\end{align*}
where $Z(x)= \E_{\mu(y|x)} e^{r(x,y)/\alpha}$.

\textbf{(b) Optimal solution.} Assume unlimited model capacity and data samples. The optimal $r_{\theta^*}$ and $\pi_{\theta^*}$ for solving Eq. \ref{Eq:unnormalized_InfoNCA_loss} are
\begin{align*}
    &r_{\theta^*}(x, y) = r(x, y)/\alpha  - \log \E_{\mu(y|x)} e^{r(x,y)/\alpha}, \\
 \text{and} \quad  & \pi_{\theta^*}(x,y) \propto \mu(y|x) e^{{r(x,y)/\alpha}}.
\end{align*}
\end{theorem}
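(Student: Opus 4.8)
The plan is to mirror the structure of the InfoNCA proof, adapting it to the binary ($\nu \in \{0,1\}$) noise-contrastive setup. For part (a), I would first discard the $\theta$-independent entropy term in the KL divergence, reducing the minimization to the cross-entropy $-\E_{p^{\text{joint}}(x,y)} \E_{p^{\text{joint}}(\nu|x,y)} \log p^{\text{joint}}_\theta(\nu|x,y)$. Next I would re-expand the joint expectation using the chain rule in the forward direction, $p^{\text{joint}}(x,y)\,p^{\text{joint}}(\nu|x,y) = p(x)\,p(\nu)\,p^{\text{joint}}(y|x,\nu)$, so that the two class labels separate: the $\nu=1$ term carries an expectation over $\pi^*(y|x)$ against $\log \sigma(f_\theta(x,y))$, while the $\nu=0$ term carries an expectation over $\mu(y|x)$ against $\log \sigma(-f_\theta(x,y))$, each weighted by the uniform prior $p(\nu)=\tfrac12$.

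The key manipulation is then to rewrite the $\nu=1$ term as a single expectation over the noise distribution $\mu$ by importance sampling, $\E_{\pi^*(y|x)}[\cdot] = \E_{\mu(y|x)} \tfrac{\pi^*(y|x)}{\mu(y|x)}[\cdot] = \E_{\mu(y|x)} \tfrac{e^{r(x,y)/\alpha}}{Z(x)}[\cdot]$, using the explicit form of $\pi^*$ from \eqref{eq:tarining_target}. Collecting both terms under a common $\E_{p(x)\mu(y|x)}$ and dropping the overall constant factor $\tfrac12$ (which does not move the minimizer) yields exactly the stated objective \eqref{Eq:unnormalized_NCA_loss}.

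For part (b), I would optimize the equivalent objective pointwise in $f := f_\theta(x,y)$ for each fixed $(x,y)$, since unlimited capacity lets $f_\theta$ take an independent value at every point. Writing $w := e^{r(x,y)/\alpha}/Z(x)$, the per-point integrand is $-[w\log\sigma(f) + \log\sigma(-f)]$; differentiating with $\tfrac{d}{df}\log\sigma(f)=\sigma(-f)$ and $\tfrac{d}{df}\log\sigma(-f)=-\sigma(f)$ and setting the derivative to zero gives $\sigma(f^*) = w\,\sigma(-f^*)$, i.e. $\sigma(f^*) = w/(1+w)$. Solving for $f^*$ clears the partition-like denominators and leaves $f_{\theta^*}(x,y) = r(x,y)/\alpha - \log Z(x) = r(x,y)/\alpha - \log \E_{\mu(y|x)} e^{r(x,y)/\alpha}$, from which $\pi_{\theta^*}(y|x) = \mu(y|x)e^{f_{\theta^*}(x,y)} \propto \mu(y|x)e^{r(x,y)/\alpha}$ follows by \eqref{Eq:NCA_definition}. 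Equivalently, one can argue that the global minimum of the KL is zero and is attained when the model posterior $\sigma(f_{\theta^*})$ matches the data posterior $\pi^*/(\mu+\pi^*)$, which produces the same $f_{\theta^*}$ after clearing denominators.

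I do not expect a serious obstacle; the computation is a close analogue of classical NCE consistency. The two points that require care are the importance-sampling step — which is where the reward weight $e^{r(x,y)/\alpha}/Z(x)$ enters and must be tracked correctly — and the contrast with InfoNCA: here the regularizer term $\log\sigma(-f_\theta)$ pins down the \emph{absolute} level of $f_\theta$, so (unlike Theorem~\ref{thrm:InfoNCA}) no free additive $C(x)$ survives. As a consistency check I would verify that the recovered optimum is genuinely self-normalized, i.e. $\E_{\mu(y|x)} e^{f_{\theta^*}(x,y)} = \E_{\mu(y|x)} e^{r(x,y)/\alpha}/Z(x) = 1$, confirming that the unconstrained classification objective automatically enforces the normalization assumed in the model definition \eqref{Eq:NCA_definition}.
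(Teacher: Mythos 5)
Your proposal is correct. Part (a) follows the paper's proof essentially line for line: drop the entropy term to get a cross-entropy, rewrite $p^{\text{joint}}(x,y)\,p^{\text{joint}}(\nu|x,y)=p(x)\,p(\nu)\,p^{\text{joint}}(y|x,\nu)$, split the two classes into $\E_{p(x)\pi^*(y|x)}\log\sigma(f_\theta)$ and $\E_{p(x)\mu(y|x)}\log\sigma(-f_\theta)$, importance-sample the positive-class term back to $\mu$ via $\pi^*/\mu = e^{r/\alpha}/Z$, and drop the constant prior factor $\tfrac12$. For part (b) your primary route differs from the paper's: the paper argues that with unlimited capacity the KL can be driven to zero, equates the model posterior $\sigma(f_{\theta^*}(x,y))$ with the data posterior $\pi^*/(\mu+\pi^*)$, and solves $e^{f_{\theta^*}}=\pi^*/\mu$; you instead minimize the equivalent objective \eqref{Eq:unnormalized_NCA_loss} pointwise in $f=f_\theta(x,y)$, solving the first-order condition $w\,\sigma(-f^*)=\sigma(f^*)$ with $w=e^{r/\alpha}/Z(x)$ to get $e^{f^*}=w$. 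The two arguments are equivalent (and you say so); your calculus route has the mild advantage of certifying the minimizer of the stated loss directly, but to be airtight it should note that each pointwise term $-w\log\sigma(f)-\log\sigma(-f)$ is strictly convex in $f$ (its second derivative is $(1+w)\,\sigma(f)\sigma(-f)>0$), so the stationary point is the unique global minimum and not merely a critical point. Your closing consistency check that $\E_{\mu(y|x)}e^{f_{\theta^*}(x,y)}=1$, i.e.\ the optimum really is self-normalized as the model definition \eqref{Eq:NCA_definition} presumes, does not appear in the paper and is a worthwhile addition.
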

\begin{proof}

\textbf{(a) Equivalent objective.}
\begin{align*}
    &\min_\theta \E_{p^{\text{joint}}(x, y)}\KL[p^{\text{joint}}(\nu|x,y) || p^{\text{joint}}_\theta(\nu|x,y)]\\
    \Leftrightarrow &\min_\theta \E_{p^{\text{joint}}(x, y)} \E_{p^{\text{joint}}(\nu|x,y)} \log \frac{p^{\text{joint}}(\nu|x,y)}{p^{\text{joint}}_\theta(\nu|x,y)} \\
    \Leftrightarrow &\min_\theta - \E_{p^{\text{joint}}(x, y)} \E_{p^{\text{joint}}(\nu|x,y)} \log p^{\text{joint}}_\theta(\nu|x,y) \\
    \Leftrightarrow &\min_\theta - \E_{p(x) p(\nu) p^{\text{joint}}(y|x,\nu)} \log p^{\text{joint}}_\theta(\nu|x,y) \tag{Bayes' rule}\\
    \Leftrightarrow &\min_\theta - [p(\nu=0)\E_{p(x) p^{\text{joint}}(y|x,\nu=0)} \log p^{\text{joint}}_\theta(\nu=0|x,y) + \\  &\quad \quad \quad  p(\nu=1)\E_{p(x) p^{\text{joint}}(y|x,\nu=1)} \log p^{\text{joint}}_\theta(\nu=1|x,y)] \\
    \Leftrightarrow &\min_\theta - \left[\E_{p(x) \mu(y|x)} \log \sigma (-r_\theta(x,y)) + \E_{p(x) \pi^*(y|x)} \log \sigma (r_\theta(x,y))\right] \tag{by Eq. \ref{eq:NCA_model_posterior}}\\
    \Leftrightarrow &\min_\theta - \E_{p(x)  \mu(y|x)} \frac{e^{r(x,y)/\alpha}}{Z(x)}  \log \sigma (r_\theta(x,y)) + \log \sigma (-r_\theta(x,y)) \tag{importance sampling}\\
\end{align*}
\textbf{(b) Optimal solution.}

Given conclusions from (a). With unlimited model capacity, $p^{\text{joint}}_\theta(\nu|x,y)$ could represent any discrete distribution, such that we can arrive at the global optimal point given infinite training data.
\begin{align*}
    &\E_{p^{\text{joint}}(x, y)} \KL[p^{\text{joint}}(\nu|x,y) || p^{\text{joint}}_{\theta^*}(\nu|x,y)] = 0\\
    \Longrightarrow \quad &p^{\text{joint}}(\nu|x,y) = p^{\text{joint}}_{\theta^*}(\nu|x,y)\quad \forall x, \nu, y\\
    \Longrightarrow \quad &p^{\text{joint}}(\nu=1|x,y) = p^{\text{joint}}_{\theta^*}(\nu=1|x,y)\quad \forall x, y\\
    \Longrightarrow \quad &\frac{\pi^*(y|x)}{\mu(y|x) +\pi^*(y|x)} = \sigma (r_\theta^*(x,y)) = \frac{e^{r_\theta^*(x,y)}}{1+e^{r_\theta^*(x,y)}} \quad \forall x, y\\
    \Longrightarrow \quad &e^{r_\theta^*(x,y)} = \frac{\pi^*(y|x)}{\mu(y|x)} \quad \forall x, y\\
    \Longrightarrow \quad &r_{\theta^*}(x, y) = r(x, y)/\alpha  - \log \E_{\mu(y|x)} e^{r(x,y)/\alpha} \quad \forall x, y\\
    \Longrightarrow \quad &\pi_{\theta^*}(x,y) \propto \mu(y|x) e^{{r(x,y)/\alpha}} \quad \forall x, y
\end{align*}
\end{proof}

\newpage
\section{Pseudocode}
\label{sec:Pseudocode}
PyTorch code for the InfoNCA/NCA loss for reward datasets is provided below:
\vspace{-0.1cm}
\begin{verbatim}
import torch.nn.functional as F

def reward_loss(pi_logps, ref_logps, rewards, alpha, beta, loss_type):
    """
    pi_logps: policy logprobs for K responses, shape (B, K)
    ref_logps: reference logprobs for K responses, shape (B, K)
    rewards: reward labels for K responses, shape (B, K)
    alpha: the reward temperature controlling strength of KL penalty
    beta: the parameterization coefficient that defines the reward model
    loss_type: could be either "InfoNCA" or "NCA" loss
    """

    soft_labels = (rewards / alpha).softmax(dim=-1) # (B, K)

    model_rewards = (pi_logps - ref_logps) * beta # (B, K)

    if loss_type == "InfoNCA":
        model_logps = model_rewards.log_softmax(dim=-1) # (B, K)
        losses = - (soft_labels * model_logps).sum(dim=-1) # (B,)
    elif loss_type == "NCA":
        optimization = - (soft_labels * F.logsigmoid(model_rewards)).sum(dim=-1) # (B,)
        regularization = - F.logsigmoid(-model_rewards).mean(dim=-1) # (B,)
        losses =  optimization + regularization # (B,)

    return losses.mean()
\end{verbatim}
The loss implementation under pairwise preference settings is equivalent to reward losses with $K=2$ and $\alpha \to 0$. We provide the code separately for easy comparison with DPO.
\vspace{-0.1cm}
\begin{verbatim}
def preference_loss(chosen_pi_logps, chosen_ref_logps, 
                    rejected_pi_logps, rejected_ref_logps, 
                    beta, loss_type):
    """
    chosen_pi_logps: policy logprobs for the preferred responses, shape (B, )
    chosen_ref_logps: reference logprobs for the preferred responses, shape (B, )
    rejected_pi_logps: policy logprobs for the dispreferred responses, shape (B, )
    rejected_ref_logps: reference logprobs for the dispreferred responses, shape (B, )
    beta: the parameterization coefficient that defines the reward model
    loss_type: one of "InfoNCA", "NCA" or "DPO" loss
    """

    chosen_rewards = (chosen_pi_logps - chosen_ref_logps) * beta # (B,)
    rejected_rewards = (rejected_pi_logps - rejected_ref_logps) * beta # (B,)

    if loss_type in ["DPO", "InfoNCA"]:
        losses = -F.logsigmoid(chosen_rewards - rejected_rewards) # (B,)
    elif loss_type == "NCA":
        losses = - F.logsigmoid(chosen_rewards) \
                 - 0.5 * F.logsigmoid(-chosen_rewards) \
                 - 0.5 * F.logsigmoid(-rejected_rewards) # (B,)

    return losses.mean()
\end{verbatim}
\newpage
\section{Experimental Details}
\label{sec:Experimental_details}
\textbf{Experiments with UltraFeedback.} Our implementation is heavily based on the Transformer Reinforcement Learning (TRL) library \citep{trl} and Zephyr's official code base \citep{zephyr}. All models are fine-tuned from the publicly accessible HuggingFaceH4/mistral-7B-SFT-beta model. Experiments are run on Nvidia A40 or RTX 4090 GPUs using bfloat16 precision. We ablate $\beta \in \{3e-4, 1e-3, 3e-3, 1e-2, 3e-2, 1e-1, 3e-1, 1.0\}$ and $\alpha \in \{0.01, 0.1, 0.33, 1.0, 3.33\}$. The default reward temperature $\alpha$ is 0.01. The default parameterization coefficient $\beta$ is also 0.01. We adopt the QLoRA \citep{qlora} fine-tuning technique with rank 16, $\alpha_{\text{lora}} = 16$, and a dropout rate of 0.05. We train all models for 1 epoch. The batch size is 32. We use an AdamW optimizer with a learning rating of 5e-6. For KTO and IPO baselines, we adopt exactly the same training pipeline for reporting their performance except that we tune the $\beta \in \{0.01, 0.1, 0.3, 0.5, 1.0\}$. We find the most suitable $beta$ for KTO is 0.01, and for IPO is 0.5.

\textbf{Experiments with UltraInteract.} We follow \cite{ultrainteract} and fine-tune all parameters of the pretrained model with UltraInteract and UltraFeedback. 
Specifically, for HuggingFaceH4/mistral-7B-SFT-beta model, we directly adopt the recipe of Eurus, with the same data mixture (all 220K multi-turn trajectory pairs in UltraInteract and all 340K pairs in UltraFeedback) and hyperparameters. 
All $\beta$ is set to 0.1. The model is trained for 1 epoch with a cosine schedule. The learning rate is 5e-7 and the warmup ratio is 0.1. 
For Mixtral-8x7B, we first SFT the model with the recipe for Eurus-70B-SFT, and then applied the same preference learning recipe as aforementioned. 

\begin{figure}[!h]
\centering
\includegraphics[width=0.85\linewidth]{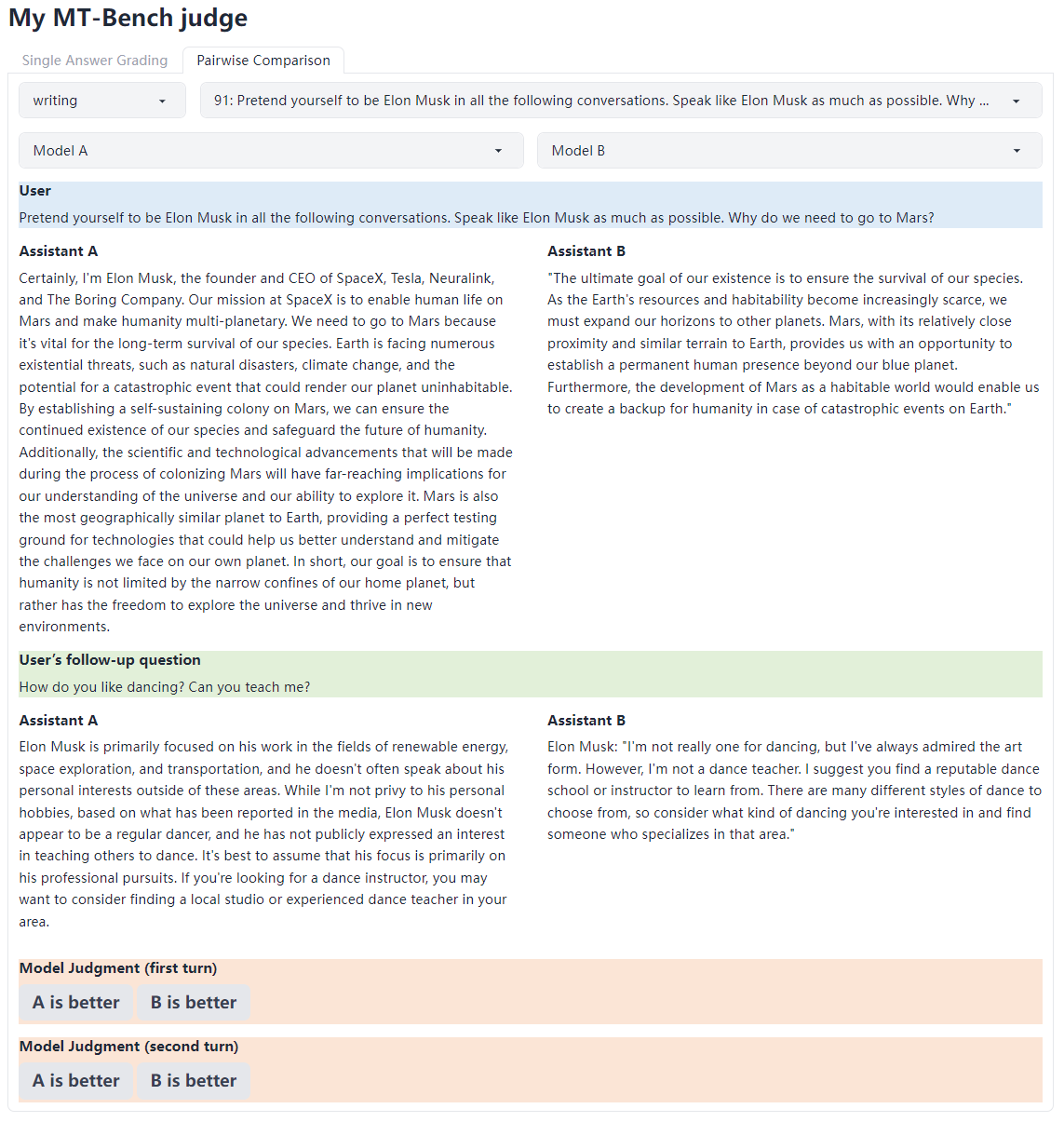}
\vspace{-0.1cm}
\caption{Rating system screenshot. Our rating system is adapted from \citep{MTbench}.}
\end{figure}

\newpage
\section{Additional Experiment Results}
\label{appendix:ablations}

\begin{figure*}[ht]
\centering
\begin{minipage}{0.48\linewidth}
\includegraphics[width=0.85\columnwidth]{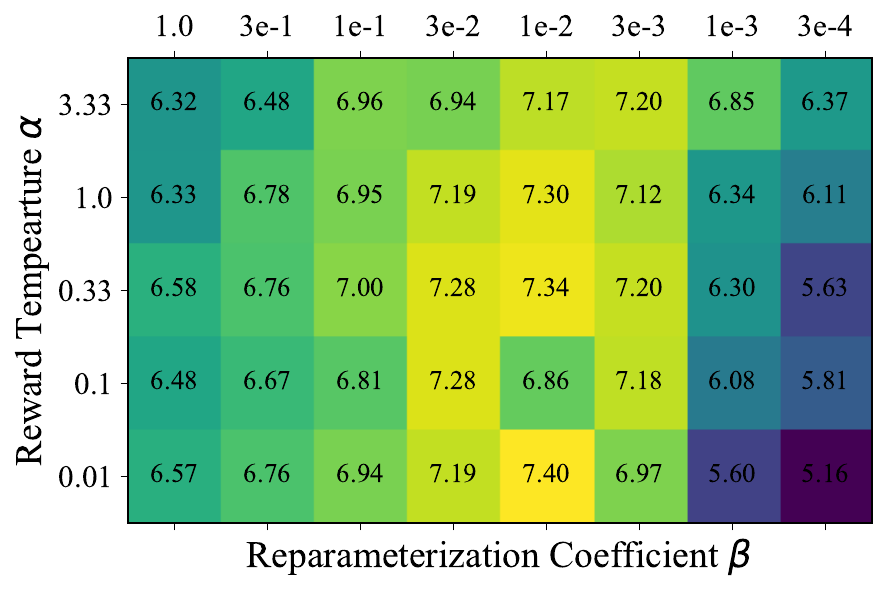}
\centering
\end{minipage}
\begin{minipage}{0.48\linewidth}
\includegraphics[width=0.85\columnwidth]{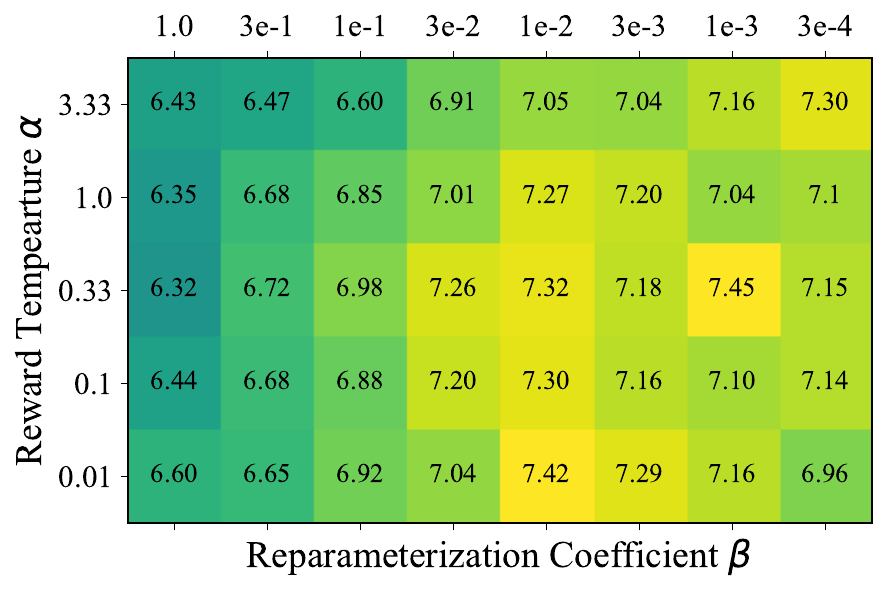}
\centering
\end{minipage}
\vspace{-0.1cm}
\caption{\label{fig:NCA_ablate_alpha_betaK2}MT-bench score for \textbf{InfoNCA (left)} and \textbf{NCA (right)} regarding various $\alpha$ and $\beta$. Results are averaged for $K=2$ and $K=4$ experiments. Overall, model performance is relatively more sensitive to variations of $\beta$ compared with $\alpha$. NCA shows greater tolerance to hyperparameter variations. }
\vspace{-0.3cm}
\end{figure*}

\begin{table}[ht]
\centering
\label{tab:functional_forms}
\begin{tabular}{@{}llcc@{}}
\toprule
Method      & Functional Form                                    & MT-bench & KL Divergence \\ 
\midrule
Mistral-7B-SFT  & $ - $                                              & 6.45     & 0.00          \\
$\quad+$DPO         & $r_\theta=\beta \log \frac{\pi_\theta}{\mu}$       & 7.34     & 0.53          \\
$\quad+$InfoNCA     & $r_\theta=\beta \log \frac{\pi_\theta}{\mu}$       & \textbf{7.63} & \textbf{0.47} \\
            & $r_\theta=\beta \log \pi_\theta$                            & 4.65     & 4.39          \\
            & $r_\theta=\beta \log \pi_\theta$ + SFT regularization       & 7.15     & 0.26          \\
$\quad+$NCA         & $r_\theta=\beta \log \frac{\pi_\theta}{\mu}$       & \textbf{7.52} & \textbf{0.39} \\
            & $r_\theta=\beta \log \pi_\theta$                            & 5.82     & 3.32          \\
            & $r_\theta=\beta \log \pi_\theta$ + SFT regularization       & 6.88     & 0.09          \\
\bottomrule
\end{tabular}
\vspace{3mm}
\caption{Comparison of different functional forms of $r_\theta$ on UltraFeeback datasets.}
\end{table}

\clearpage
\newpage
\section{Appendix F: Additional Empirical Results}

We provide sampled responses of LM policies trained respectively with the DPO and NCA to compare their data regularization feature. In general, generations from the NCA policy are more similar to the preferred response in the dataset.
\begin{table}[!ht]
\begin{tabular}{p{1.0\linewidth}}
\textbf{Prompt}   \\
\midrule
You are given an answer. Create a question for the answer.

Grams to Ounces conversion 1 gram (g) is equal to 0.03527396195 ounces (oz).  \\
\midrule
\textbf{Preferred Response in Dataset} (r=8.0) \\
\midrule
What is the conversion rate for grams to ounces for 1 gram?  \\
\midrule
\textbf{Dispreferred Response in Dataset} (r=7.0) \\
\midrule
What is the relationship between grams and ounces in weight measurement?  \\
\midrule
\textbf{Reponse generated by $\pi_\theta^{\text{DPO}}$} (r=2.0) \\
\midrule
How many ounces are equivalent to 200 grams according to the given conversion rate of 1 gram (g) to 0.03527396195 ounces (oz)?
  \\
\midrule
\textbf{Reponse generated by $\pi_\theta^{\text{NCA}}$} (r=10.0)\\
\midrule
Question: What is the conversion rate from grams to ounces, and how many grams are one ounce?    \\
\bottomrule
\caption{Sampled generations and the log-likelihood ratio of the NCA and the DPO/InfoNCA policy.}
\end{tabular}
\label{tab:sss}
\end{table}

\begin{table}[!ht]
\begin{tabular}{p{1.0\linewidth}}
\textbf{Prompt}   \\
\midrule
If "A busy crosswalk somewhere in asia." does that mean that "The crosswalk was blocked off by police."? \newline Options:\newline
- yes; - it is not possible to tell 
; - no  \\
\midrule
\textbf{Preferred Response in Dataset} (r=8.0) \\
\midrule
It is not possible to tell. \newline
Confidence: 95\%  \\
\midrule
\textbf{Dispreferred Response in Dataset} (r=7.5) \\
\midrule
Question: \newline
If "A busy crosswalk somewhere in asia." does that mean that "The crosswalk was blocked off by police."? \newline
Answer: \newline
It is not possible to tell. \newline
Confidence: 90\%   \\
\midrule
\textbf{Reponse generated by $\pi_\theta^{\text{DPO}}$} (r=3.0) \\
\midrule
- It is not possible to determine whether "The crosswalk was blocked off by police." in "A busy crosswalk somewhere in Asia." To determine whether the crosswalk was blocked off by police, you would need additional information or context.
  \\
\midrule
\textbf{Reponse generated by $\pi_\theta^{\text{NCA}}$} (r=8.0)\\
\midrule
It is not possible to tell. \newline
Confidence: 95\%    \\
\bottomrule
\caption{Sampled generations and the log-likelihood ratio of the NCA and the DPO/InfoNCA policy.}\vspace{-2mm}
\end{tabular}
\end{table}

\begin{table}[!ht]
\begin{tabular}{p{1.0\linewidth}}
\textbf{Prompt}   \\
\midrule
In this task, you need to count the occurrences of the given word in the given sentence.

Sentence: 'a green double decker bus sitting on top of a parking lot'. Count the occurrences of the word 'a' in the given sentence.\\
\midrule
\textbf{Preferred Response in Dataset} (r=9.0) \\
\midrule
The word 'a' appears 3 times in the given sentence. 

Confidence: 100\% \\
\midrule
\textbf{Dispreferred Response in Dataset} (r=6.0) \\
\midrule
The given sentence is "a green double decker bus sitting on top of a parking lot." To answer your question about the occurrences of the word 'a', I will count the number of times the word appears in the sentence.

After counting the occurrences of the word 'a' in the sentence, I found that it appears once. Therefore, the answer to your question is that the word 'a' appears once in the given sentence. \\
\midrule
\textbf{Reponse generated by $\pi_\theta^{\text{DPO}}$} (r=4.0) \\
\midrule
Of course, I'm here to help! There are 11 occurrences of the word 'a' in the given sentence. Here's how you can count them: 1. 'a' in 'a' 2. 'a' in 'green' 3. 'a' in 'double' 4. 'a' in 'decker' 5. 'a' in 'bus' 6. 'a' in 'sitting' 7. 'a' in 'on ...

Confidence: 90\% \\
\midrule
\textbf{Reponse generated by $\pi_\theta^{\text{NCA}}$} (r=8.0)\\
\midrule
Sure! The word 'a' appears 3 times in the given sentence.   \\
\bottomrule
\caption{Sampled generations and the log-likelihood ratio of the NCA and the DPO/InfoNCA policy.}\vspace{-1mm}
\end{tabular}
\end{table}

\begin{table}[!ht]
\begin{tabular}{p{1.0\linewidth}}
\textbf{Prompt}   \\
\midrule
if $X^2 + 3 = 12$. What is X?\\
\midrule
\textbf{Preferred Response in Dataset} (r=9.0) \\
\midrule
To find the value of X, we'll first need to isolate $X^2$:

$X^2 + 3 = 12$

Subtract 3 from both sides:

$X^2 = 9$

Now, we can find the square root of both sides of the equation to solve for X:

$X = \pm 3$

So, X can be either 3 or -3. \\
\midrule
\textbf{Dispreferred Response in Dataset} (r=4.0)\\
\midrule
I'm sorry, I cannot solve this algebra equation. Can I assist you with anything else? \\
\midrule
\textbf{Reponse generated by $\pi_\theta^{\text{DPO}}$} (r=7.0)  \\
\midrule
Sure, I'd love to help. By solving the equation, the value of X should be 3. \\
\midrule
\textbf{Reponse generated by $\pi_\theta^{\text{NCA}}$} (r=9.0)\\
\midrule
To find the value of $ X $, we need to solve the equation $ X^2 + 3 = 12 $. 

First, subtract 3 from both sides:

$ X^2 = 12 - 3 $
$ X^2 = 9 $

Next, take the square root of both sides:

$ X = \pm \sqrt{9}$

So, $ X $ can be either $ 3 $ or $ -3 $.\\
\bottomrule
\caption{Sampled generations and the log-likelihood ratio of the NCA and the DPO/InfoNCA policy.}
\end{tabular}
\end{table}
\newpage

\section*{Appendix H: Comparing NCA with DPO on Data Regularization.}

Suppose there are only 4 possible responses $\{y_a, y_b, y_c, y_d\}$. Each response has a reward $r$, and a pretrained language model policy $\mu$:

\begin{table}[h]
\centering
\begin{tabular}{lcccc}
\hline
- & $y_{a}$ & $y_{b}$ & $y_{c}$ & $y_{d}$ \\
\hline
Likelihood $\mu(y)$ & 40\% & 50\% & 5\% & 5\% \\
Reward $r(y)$ & 10 & 3 & 7 & 0 \\
\hline
\end{tabular}
\caption{Initial probabilities and rewards}
\end{table}

Given a preference dataset $D = \{ y_a > y_b \}$ ($y_{c}$ and $y_{d}$ do not exist in the dataset), the loss functions are defined as:

\begin{align*}
L_\theta^{\text{DPO}} &= -\log\sigma\left(\log \frac{{\pi_\theta}(y_a)}{\mu(y_a)} - \log \frac{{\pi_\theta}(y_b)}{\mu(y_b)}\right) \\
L_\theta^{\text{NCA}} &= -\log\sigma\left(\log \frac{{\pi_\theta}(y_a)}{\mu(y_a)}\right) - \frac{1}{2}\log\sigma\left(-\log \frac{{\pi_\theta}(y_a)}{\mu(y_a)}\right) -\frac{1}{2}\log\sigma\left(-\log \frac{{\pi_\theta}(y_b)}{\mu(y_b)}\right)
\end{align*}

After fine-tuning, there are several possibilities for $\pi_\theta$:

\begin{table}[h]
\centering
\begin{tabular}{lcccccc}
\hline
- & $\pi_\theta(y_a)$ & $\pi_\theta(y_b)$ & $\pi_\theta(y_c)$ & $\pi_\theta(y_d)$ & $\log \frac{{\pi_\theta}(y_a)}{\mu(y_a)} - \log \frac{{\pi_\theta}(y_b)}{\mu(y_b)}$ & $\bar{r}$ \\
\hline
(1) & 20\% $\downarrow$ & 10\% $\downarrow\downarrow$ & 5\% & 65\% $\uparrow$ & $0.916 > 0$ & 2.65 $\downarrow$ \\
(2) & 20\% $\downarrow$ & 10\% $\downarrow\downarrow$ & 65\% $\uparrow$ & 5\% & $0.916 > 0$ & 6.85 $\uparrow$ \\
(3) \textbf{Wanted} & 60\% $\uparrow$ & 30\% $\downarrow$ & 5\% & 5\% & $0.916 > 0$ & 7.25 $\uparrow$ \\
\hline
\end{tabular}
\caption{Post fine-tuning probabilities and rewards}
\end{table}

In scenarios (1) and (2), we can see that the likelihood for both $y_a$ and $y_b$ decreases. However, (1) and (2) satisfy the DPO loss function because the likelihood for $y_b$ decreases more, and the relative likelihood margin between $y_a$ and $y_b$ becomes larger. In (1) and (2), the likelihood for either $y_c$ or $y_d$ increases because $\pi(y_a) \downarrow + \pi(y_b) \downarrow + \pi(y_c) ? + \pi(y_d) ? = 1$. However, $y_c$ and $y_d$ are unreliable because we do not know their quality (rewards). The LM policy could generalize to a low-quality response like $y_d$ (case (1)).

In contrast, the NCA effectively prevents the winning response likelihood $\pi(y_a)$ from decreasing, because it mainly optimizes the absolute data likelihood instead of just caring about the relative likelihood margin $\log \frac{{\pi_\theta}(y_a)}{\mu(y_a)} - \log \frac{{\pi_\theta}(y_b)}{\mu(y_b)}$. Thus, we say NCA is more likely to assign a larger likelihood to responses within the dataset.


\section*{Appendix G: Comparision with Related Works}
We compare with a prior work SLiC-HF \citep{slic} in this section. SLiC is inspired by \citep{slic_prior} and similarly aims to calibrate sequence likelihood to align with human preferences. Given a preference data pair $\{{x \to y_w > y_l}\}$, the loss function for SLiC is
\begin{equation*}
   L_\theta := \max (0, \delta - \log \pi_\theta(y_w|x) + \log \pi_\theta(y_w|x) ) - \lambda \log \pi_\theta(y_w|x),
\end{equation*}
where $\delta$ is a hyperparameter that controls the likelihood margin of data, and $\lambda$ controls the regularization weight of the loss. The main difference between our proposed method and the SLiC loss can be summarized as follows:
\begin{itemize}
    \item \textbf{Theoritical framework.} SLiC is mainly adapted from the existing LM calibration methods \citep{slic_prior}. In contrast, our method is based on noise contrastive estimation methods \citep{NCE,infoNCE}.
    \item \textbf{Policy regularization.} The training process of SLiC is regularized by the additional SFT loss controlled by $\lambda$. In contrast, our proposed method is regularized through the parameterization technique $r_\theta=\beta \log \frac{\pi_\theta}{\mu}$ controlled by $\beta$.
    \item \textbf{Learning target.} SLiC directly optimizes the policy model $\pi_\theta$, while our method directly optimizes the residual model $r_\theta$.
\end{itemize}

\end{document}